\documentclass{article}

\PassOptionsToPackage{numbers, compress}{natbib}

\usepackage[final]{neurips_2022}

\usepackage[utf8]{inputenc} %
\usepackage[T1]{fontenc}    %
\usepackage{hyperref}       %
\usepackage{url}            %
\usepackage{booktabs}       %
\usepackage{amsfonts}       %
\usepackage{nicefrac}       %
\usepackage{microtype}      %
\usepackage{xcolor}         %
\usepackage{bm}
\usepackage{enumitem}
\usepackage{amsmath, amsthm, amssymb}
\usepackage{mathtools}
\usepackage{multirow}
\usepackage{outlines}
\usepackage{enumitem}

\usepackage{algorithm}
\usepackage{listings}

\usepackage{arydshln}
\makeatletter
\def\adl@drawiv#1#2#3{%
        \hskip.5\tabcolsep
        \xleaders#3{#2.5\@tempdimb #1{1}#2.5\@tempdimb}%
                #2\z@ plus1fil minus1fil\relax
        \hskip.5\tabcolsep}
\newcommand{\cdashlinelr}[1]{%
  \noalign{\vskip\aboverulesep
           \global\let\@dashdrawstore\adl@draw
           \global\let\adl@draw\adl@drawiv}
  \cdashline{#1}
  \noalign{\global\let\adl@draw\@dashdrawstore
           \vskip\belowrulesep}}
\makeatother

\usepackage{pifont}%
\newcommand{\cmark}{\ding{51}}%
\newcommand{\xmark}{\ding{55}}%

\newtheorem{prop}{Proposition}
\newtheorem{lemma}{Lemma}
\newtheorem{corollary}{Corollary}

\DeclareMathOperator*{\argmin}{arg\,min} %

\title{Unsupervised Visual Representation Learning via Mutual Information Regularized Assignment}

\author{
Dong Hoon Lee \\
KAIST\thanks{Work partially done during internship at LG AI Research.} \\
\texttt{donghoonlee@kaist.ac.kr} \\
\And
Sungik Choi \\
LG AI Research \\
\texttt{sungik.choi@lgresearch.ai} \\
\AND
Hyunwoo Kim \\
LG AI Research \\
\texttt{hwkim@lgresearch.ai} \\
\And
Sae-Young Chung \\
KAIST \\
\texttt{sychungster@naver.com} \\
}

\begin{document}

\maketitle

\begin{abstract}
This paper proposes Mutual Information Regularized Assignment (MIRA), a pseudo-labeling algorithm for unsupervised representation learning inspired by information maximization.
We formulate online pseudo-labeling as an optimization problem to find pseudo-labels that maximize the mutual information between the label and data while being close to a given model probability.
We derive a fixed-point iteration method and prove its convergence to the optimal solution.
In contrast to baselines, MIRA combined with pseudo-label prediction enables a simple yet effective clustering-based representation learning without incorporating extra training techniques or artificial constraints such as sampling strategy, equipartition constraints, etc.
With relatively small training epochs, representation learned by MIRA achieves state-of-the-art performance on various downstream tasks, including the linear/{\it k}-NN evaluation and transfer learning.
Especially, with only 400 epochs, our method applied to ImageNet dataset with ResNet-50 architecture achieves 75.6\% linear evaluation accuracy.
Our implementation is available at \url{https://github.com/movinghoon/mira}.

\end{abstract}

\section{Introduction}
There has been a growing interest in using a large-scale dataset to build powerful machine learning models \cite{radford21}.
Self-supervised learning (SSL), which aims to learn a useful representation without labels, is suitable for this trend; it is actively studied in the fields of natural language processing \cite{devlin19,du21} and computer vision \cite{chen20,he20}.
In the vision domain, recent SSL methods commonly use data augmentations and induce their visual representation to be augmentation-invariant.
They have achieved state-of-the-art performance surpassing supervised representation in a variety of visual tasks, including semi-supervised learning~\cite{caron20, zbontar21}, transfer learning~\cite{ericsson21}, and object detection~\cite{chen20_3}.

Meanwhile, a line of work uses clustering for un-/self-supervised representation learning.
They explicitly assign pseudo-labels to embedded representation via clustering, and the model is thereby trained to predict such labels.
These clustering-based methods can account for inter-data similarity; representations are encouraged to encode the semantic structure of data.
Prior works~\cite{yan16,Xie16,Bautista16,Hu17} have shown encouraging results in small-scaled settings; \citet{caron18} show that it can also be applied to the large-scaled dataset or even to a non-curated dataset~\cite{caron19}.
Recently, several works~\cite{asano20, caron20, li21} have adopted the philosophy of augmentation invariance and achieved strong empirical results.
They typically assign pseudo-labels using augmented views while predicting the labels by looking at other differently augmented views.

Despite its conceptual simplicity, a naive application of clustering to representation learning is hard to achieve, especially when training with large-scale datasets.
This is because clustering-based methods are prone to collapse, i.e., all samples are assigned to a single cluster; hence, recent methods heavily rely on extra training techniques or artificial constraints, such as pre-training~\cite{yan20}, sampling strategy~\cite{caron18}, equipartition constraints~\cite{asano20,caron20}, to avoid collapsing.
However, it is unclear if these additions are appropriate or how such components will affect the representation quality.

In this paper, we propose Mutual Information Regularized Assignment (MIRA), a pseudo-labeling algorithm that enables clustering-based SSL without any artificial constraints or extra training techniques.
MIRA is designed to follow the infomax principle \cite{linsker88} and the intuition that {\it good labels are something that can reduce most of the uncertainty about the data}.
Our method assigns a pseudo-label in a principled way by constructing an optimization problem.
For a given training model that predicts pseudo-labels, the optimization problem finds a solution that maximizes the mutual information (MI) between the pseudo-labels and data while considering the model probability.
We formulate the problem as a convex optimization problem and derive the necessary and sufficient condition of solution with the Karush-Kuhn-Tucker (KKT) condition.
This solution can be achieved by fixed-point iteration that we prove the convergence.
We remark that MIRA does not require any form of extra training techniques or artificial constraints, e.g., equipartition constraints.

We apply MIRA to clustering-based representation learning and verify the representation quality on several standard self-supervised learning benchmarks.
We demonstrate its state-of-the-art performance on linear/\textit{k}-NN evaluation, semi-supervised learning, and transfer learning benchmark.
We further experiment with convergence speed, scalability, and different components of our method. 

Our contributions are summarized as follows:
\begin{itemize}[leftmargin=5.5mm]
    \item
    We propose MIRA, a simple and principled pseudo-label assignment algorithm based on mutual information.
    Our method does not require extra training techniques or artificial constraints.
    \item
    We apply MIRA to clustering-based representation learning, showing comparable performance against the state-of-the-art methods with half of the training epochs.
    Specifically, MIRA achieves 75.6\% top-1 accuracy on ImageNet linear evaluation with only 400 epochs of training and the best performance in 9 out of 11 datasets in transfer learning.
    \item
    Representation by MIRA also consistently improves over other information-based SSL methods.
    Especially our method without multi-crop augmentation achieves 74.1\% top-1 accuracy and outperforms BarlowTwins~\cite{zbontar21}, a baseline information maximization-based self-supervised method.
\end{itemize}
\section{Related works}

\paragraph{Self-supervised learning}
SSL methods are designed to learn the representation by solving pretext tasks, and recent state-of-the-art methods encourage their learned representations to be augmentation invariant.
They are based on various pretext tasks: instance discrimination~\cite{chen20, chen20_2, chen20_3, chen21_2}, metric learning~\cite{grill20, chen21}, self-training~\cite{zheng21,caron21}, and clustering~\cite{asano20,caron18,caron20}; only a few account for encoding the semantic structure of data.
While some works~\cite{Wang20, Dwibedi21, Koohpayegani21} consider the nearest neighbors in the latent space, our method belongs to the clustering-based SSL method that flexibly accounts for inter-data similarity.
Meanwhile, many SSL methods are prone to collapsing into a trivial solution where every representation is mapped into a constant vector.
Various schemes and mechanisms are suggested to address this, e.g., the asymmetric structure, redundancy reduction, etc.
We will review more relevant works in detail below.

\paragraph{Collapse preventing}
Many SSL approaches rely on extra training techniques and artificial assumptions to prevent collapsing.
In clustering-based methods, DeepCluster~\cite{caron18} adapts a sampling strategy to sample elements uniformly across pseudo-labels to deal with empty clusters; SeLa~\cite{asano20} and SwAV~\cite{caron20} impose equipartition constraints to balance the cluster distribution. 
Similarly, SelfClassifier~\cite{amrani21} uses a uniform pseudo-label prior, and PCL~\cite{li21} employs concentration scaling.
DINO~\cite{caron21} and ReSSL~\cite{zheng21} address collapsing by specific combinations of implementation details, i.e., centering and scaling with an exponential moving average network; their mechanism for preventing collapse is unclear.
In this work, we show our method can naturally avoid collapsing without any of these assumptions or training techniques.
We achieve results better than baselines with a simple but novel information regularization algorithm.
We take a more detailed comparison with SeLa and SwAV after explaining our method in Sec.~\ref{section:alg}.

\paragraph{Information maximization}
Information maximization is a principal approach to learn representation and to avoid collapse.
DeepInfoMax~\cite{hjelm19} propose the MI maximization between the local and global views for representation learning; the existence of negative pairs prevents training toward the trivial solution.
BarlowTwins~\cite{zbontar21} and W-MSE~\cite{ermolov21} address the collapsing with redundancy reduction that indirectly maximizes the content information of embedding vectors.
Among clustering-based approaches, IIC~\cite{ji19} maximizes the MI between the embedding codes to enable representation learning;
similar to ours, TWIST~\cite{feng21} proposes combining the MI between the data and class prediction as a negative loss term with an augmentation invariance consistency loss.
Both IIC and TWIST use the MI as a loss function and directly optimize their model parameters with gradient descent of the loss.
However, the direct optimization of MI terms by updating model parameters often leads to a sub-optimal solution \cite{feng21}; TWIST copes with this issue by appending the normalization layer before softmax and introducing an additional self-labeling stage.
In contrast, MIRA addresses the difficulty of MI maximization in a principled way via explicit optimization.

\begin{figure}[t]
	\vskip -0.1in
	\begin{center}
		\centerline{\includegraphics[width=0.95\columnwidth]{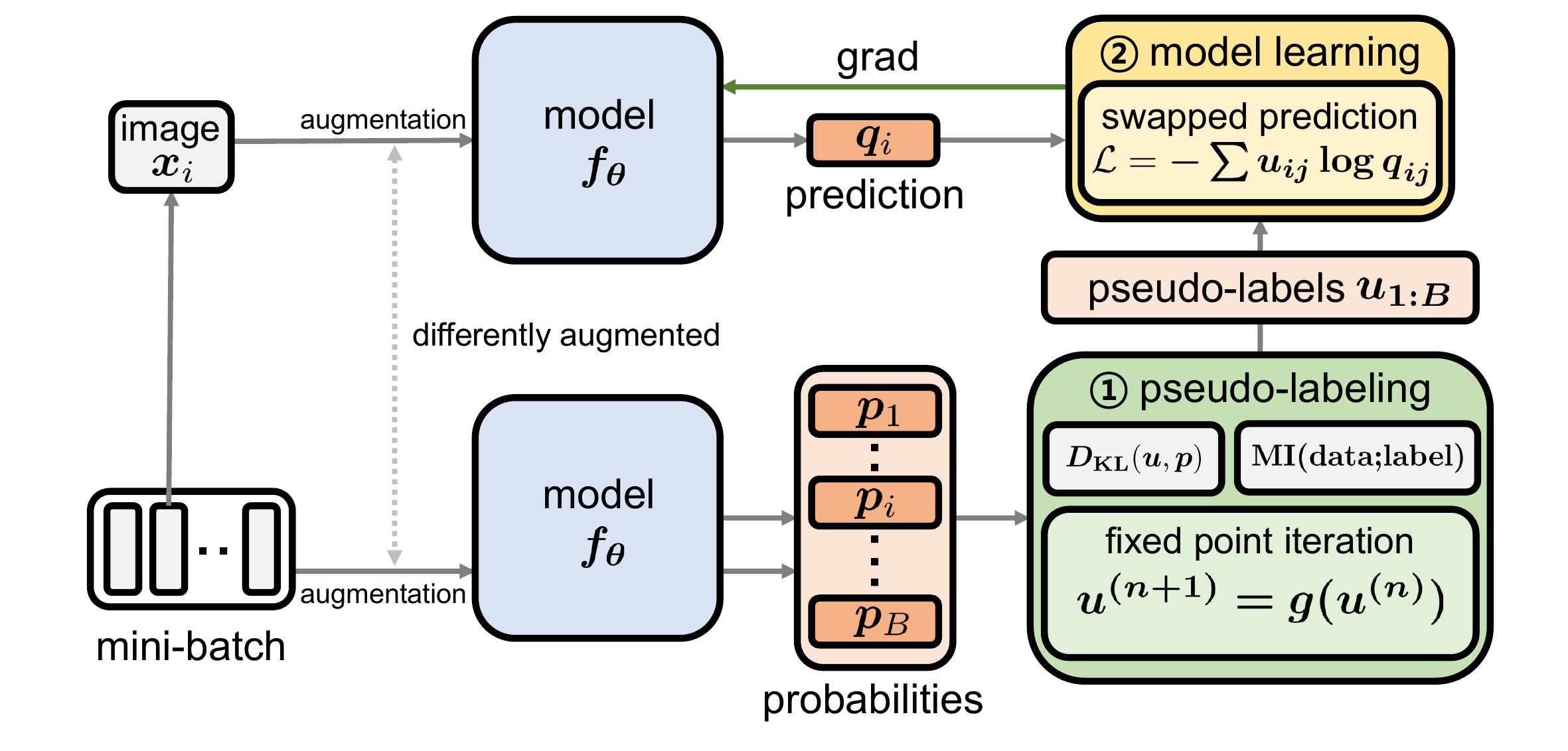}}
		\vskip -0.15in
	\end{center}
		\vskip -0.15in
    \caption{\textbf{Overview of representation learning via MIRA.}
    In our representation learning, MIRA provides pseudo-labels with model probabilities, and the model is learned by predicting the pseudo-labels.
    Our main contribution is in the \ding{172} \textbf{pseudo-labeling} process that accounts for mutual information between the pseudo-label and data.
    In MIRA, optimal pseudo-labels are computed through the fixed-point iteration (Eq.~\ref{eqn:fixed-point-iteration}).
    Given such pseudo-labels, \ding{173} \textbf{model updates} its parameters by gradient update on swapped prediction loss.
    }
    \label{fig:scheme}
\end{figure}

\section{Method}
In this section, we explain our pseudo-labeling algorithm--MIRA.
When applying MIRA to representation learning, we follow the basic framework of clustering-based representation learning that alternates between \textit{pseudo-labeling}, i.e., cluster assignments, and \textit{model training} to predict such labels.
Figure~\ref{fig:scheme} illustrates our representation training cycle.
We will first explain our main contribution, MIRA (\textit{pseudo-labeling}) and then explain how it applies to \textit{model training}.

Our idea is to employ the information maximization principle into pseudo-labeling.
We formulate an optimization problem for online clustering that assigns soft pseudo-labels to mini-batch samples (Sec.~\ref{section:opt}).
The problem minimizes the KL divergence between the model prediction (probability) and pseudo-label while maximizing the mutual information between the data and pseudo-label.
We propose an iterative fixed point method to solve the optimization problem (Sec.~\ref{section:fpi}).
For the model training, we use the swapped prediction loss~\cite{caron20} (Sec.~\ref{section:alg}).

\subsection{Mutual information regularized pseudo-labeling} \label{section:opt}
We have a classification model\footnote{In our setting, the model consists of an encoder, projection head, and classification (prototype) head as in \citet{caron20, caron21}; the encoder output will be used as a representation.} $f_\theta$ parametrized by $\theta$ that outputs $K$-dimensional logit $f_\theta(\bm{x})\in\mathbb{R}^K$ for an image $\bm{x}$, where $K$ is a predefined number of clusters.
The model probability $\bm{p}$ of an image $\bm{x}$ is then given by the temperature $\tau_t$ scaled output of the model---$\bm{p}\coloneqq\text{softmax}(f_\theta(\bm{x})/\tau_t)$---as in \citet{caron20, caron21}.
For a mini-batch of input images $\bm{X}=\{\bm{x}_i\}_{i=1}^B$, we denote the set of model probabilities  $\bm{P}=\{\bm{p}_i\}_{i=1}^B\subset\mathbb{R}^K$. 
In our pseudo-labeling, for the given model probabilities $\bm{P}$, we want to assign pseudo-labels $\bm{W^*}=\{\bm{w^*}_i\}_{i=1}^B$ that will be used for training the model by predicting them.

We argue that such pseudo-labels should maximize the mutual information (MI) between themselves and data while accounting for the model probabilities $\bm{P}$.
Let $\mathcal{B} \in \{1,...,B\}$ and $\mathcal{Y}_{\bm{W}}\in \{1,...,K\}$ be the random variables associated with the data index in mini-batch and labels by probability distributions $\bm{W}=\{\bm{w}_i\}_{i=1}^B$, respectively.
Our online pseudo-label (cluster) assignment is determined by solving the following optimization problem:
\begin{align}
    \bm{W^*} &= \argmin_{\bm{W}\subset\Delta_K} \frac{1}{B}\sum_{i=1}^B D_\text{KL}(\bm{w}_i, \bm{p}_i) - \beta \hat{I}(\mathcal{Y}_{\bm{W}}; \mathcal{B}) \label{eqn:problem},
\end{align}
where $\Delta_K \coloneqq \{ \bm{w}\in\mathbb{R}^K_+ \mid \bm{w}^\intercal\bm{1}_K = 1 \}$, $\hat{I}$ indicates an empirical (Monte Carlo) estimates of MI, and $\beta$ is a trade-off parameter.
The problem consists of the (1) KL divergence term that makes pseudo-labels to be based on the model probability $\bm{p}$ and (2) MI term between the pseudo-labels and data to induce more information about data into the pseudo-labels.
By combining these two terms, we provide a refined pseudo-label that take account of both the model probability and MI.

To make the optimization problem tractable, we substitute the MI term $\hat{I}$ with the mini-batch estimates of the entropy $\hat{H}(\mathcal{Y}_{\bm{W}} |\mathcal{B})$ and marginal entropy  $\hat{H}(\mathcal{Y}_{\bm{W}})$ in Eq.~\ref{eqn:mutual}. We get:
\begin{align}
    &\hat{I}(\mathcal{Y}_{\bm{W}};\mathcal{B}) = \hat{H}(\mathcal{Y}_{\bm{W}}) - \hat{H}(\mathcal{Y}_{\bm{W}}|\mathcal{B}) = - \sum_{j=1}^K \bar{w}_j\log{\bar{w}_j} + \frac{1}{B}\sum_{i=1}^B\sum_{j=1}^K w_{ij}\log{w_{ij}}, \label{eqn:mutual}\\
    &\frac{1}{B}\sum_{i=1}^B D_\text{KL}(\bm{w}_i, \bm{p}_i) =  -\frac{1}{B}\sum_{i=1}^B\sum_{j=1}^K w_{ij}\log{p_{ij}} + \frac{1}{B}\sum_{i=1}^B\sum_{j=1}^K w_{ij}\log{w_{ij}},\\
    &\bm{W^*} = \argmin_{\bm{W}\subset\Delta_K} -\frac{1}{B}\sum_{i=1}^B\sum_{j=1}^K w_{ij}\log{p_{ij}} + \frac{1-\beta}{B}\sum_{i=1}^B\sum_{j=1}^K w_{ij}\log{w_{ij}} + \beta\sum_{j=1}^K \overline{w}_j \log{\overline{w}_j} \label{eqn:objective},
\end{align}
where $\overline{w}_j=\frac{1}{B}\sum_{i=1}^B w_{ij}$ is the marginal probability of a cluster $j$ with $\bm{W}$.
In practice, we find the optimal point $\bm{W}^*$ of the optimization problem Eq.~\ref{eqn:objective} for pseudo-labeling.

\subsection{Solving strategy} \label{section:fpi}
To solve efficiently, we propose a fixed point iteration that guarantees convergence to the unique optimal solution $\bm{W}^*$ of our optimization problem.
The method is based on the following proposition.
\begin{prop}
    \label{prop:1}
    For $\beta\in [0,1)$, the problem Eq.~\ref{eqn:objective} is a strictly convex optimization problem with a unique optimal point $\bm{W^*}$ that satisfies the following necessary and sufficient condition:    
    \begin{align}
        \forall (i,j) \in\{1,...,B\}\times\{1,...,K\}, \quad
        {w^*}_{ij} = \frac{\overline{w^*}_j^{-\frac{\beta}{1-\beta}} p_{ij}^{\frac{1}{1-\beta}}}{\sum_{k=1}^K \overline{w^*}_k^{-\frac{\beta}{1-\beta}}
        p_{ik}^{\frac{1}{1-\beta}}}
        \label{eqn:prop}.
    \end{align}
\end{prop}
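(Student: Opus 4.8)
The plan is to establish three things in sequence. First, that the objective in Eq.~\ref{eqn:objective}, call it $F(\bm{W})$, is strictly convex over the feasible set, which together with compactness gives existence and uniqueness of the minimizer. Second, that the problem is a convex program with affine constraints, so that the Karush--Kuhn--Tucker conditions are both necessary and sufficient for optimality. Third, that the stationarity condition, once the inequality multipliers are eliminated, rearranges exactly into Eq.~\ref{eqn:prop}.

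For strict convexity I would compute the Hessian of $F$ with respect to the $BK$ variables $w_{ij}$ and show it is positive definite on the interior. The first term $-\frac{1}{B}\sum_{ij} w_{ij}\log p_{ij}$ is linear and contributes nothing. The second term $\frac{1-\beta}{B}\sum_{ij} w_{ij}\log w_{ij}$ has diagonal Hessian with entries $\frac{1-\beta}{B\,w_{ij}}$; since $\beta\in[0,1)$ makes $1-\beta>0$, these are strictly positive, so this block alone is positive definite. The third term $\beta\sum_j \overline{w}_j\log\overline{w}_j$ is $x\log x$ composed with the linear map $\bm{W}\mapsto\overline{w}$, hence convex, and its Hessian $\frac{\beta}{B^2\overline{w}_j}\delta_{jj'}$ (block-diagonal in the cluster index, an all-ones block in the batch index) is positive semidefinite because $\beta\ge 0$. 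The sum of one positive-definite and two positive-semidefinite pieces is positive definite, so $F$ is strictly convex. The feasible set $\Delta_K^{\,B}$ is convex and compact, so a minimizer exists, and strict convexity makes it unique; this is the claimed $\bm{W^*}$.

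Next I would derive Eq.~\ref{eqn:prop} from the KKT conditions. Since $\bm{p}=\text{softmax}(\cdot)$ has strictly positive entries, $F$ is finite and smooth on the interior, and the constraints (one affine equality $\bm{w}_i^\intercal\bm{1}_K=1$ per sample plus the nonnegativities $w_{ij}\ge 0$) are affine, so a constraint qualification holds and KKT is necessary and sufficient. The key preliminary step is to show the optimum is interior, i.e.\ $w^*_{ij}>0$ for all $(i,j)$: the derivative of $w\log w$ tends to $-\infty$ as $w\to 0^+$, so shifting mass onto any coordinate sitting at zero strictly decreases $F$ to first order, contradicting optimality; hence every nonnegativity multiplier vanishes. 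Writing the Lagrangian with multipliers $\lambda_i$ for the simplex constraints and setting $\partial F/\partial w_{ij}=0$ gives $-\log p_{ij}+(1-\beta)(\log w_{ij}+1)+\beta(\log\overline{w}_j+1)+B\lambda_i=0$; solving for $\log w_{ij}$, exponentiating, and absorbing all $i$-dependent constants into a normalizer pinned by $\sum_j w_{ij}=1$ yields precisely Eq.~\ref{eqn:prop}.

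The main obstacle I anticipate is the interiority argument, since it is what legitimately removes the inequality constraints and lets the stationarity equation be solved in closed exponential form; it hinges on the logarithmic blow-up of the derivative of the negative-entropy term, which is exactly where $\beta<1$ is used (at $\beta=1$ that term disappears, strict convexity is lost, and the exponents in Eq.~\ref{eqn:prop} diverge). A secondary subtlety worth flagging is that Eq.~\ref{eqn:prop} is only an implicit, self-consistent characterization: the right-hand side still contains $\overline{w^*}_j$, which itself depends on all $w^*_{ij}$, so the condition does not give $\bm{W^*}$ in closed form but rather as a fixed point---precisely what motivates the iteration analyzed next. By the sufficiency of KKT, any feasible point satisfying Eq.~\ref{eqn:prop} is the unique global minimizer, so establishing this condition fully characterizes $\bm{W^*}$.
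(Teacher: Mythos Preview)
Your proposal is correct and follows essentially the same route as the paper: decompose the objective into a linear term, a strictly convex negative-entropy term (via its diagonal Hessian), and a convex marginal-entropy term (convex function composed with an affine map), then invoke KKT with simplex multipliers and solve the stationarity equation to obtain Eq.~\ref{eqn:prop}. If anything you are more careful than the paper, which silently drops the nonnegativity constraints from the Lagrangian; your interiority argument via the $-\infty$ blow-up of $\partial_w(w\log w)$ is precisely what justifies that omission and is the right way to close that gap.
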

The proposition~\ref{prop:1} is driven by applying the Karush-Kuhn-Tucker (KKT) conditions to the optimization problem Eq.~\ref{eqn:objective}.
By substituting the necessary and sufficient condition (Eq.~\ref{eqn:prop}) into the marginal probability $\overline{w}_j=\frac{1}{B}\sum_{i=1}^B w_{ij}$, we get the necessary and sufficient condition for $\overline{w^*}$:
\begin{align}
    \overline{w^*}_j = \overline{w^*}_j^{-\frac{\beta}{1-\beta}} \frac{1}{B}\sum_{i=1}^B  \frac{ p_{ij}^{\frac{1}{1-\beta}}}{\sum_{k=1}^K \overline{w^*}_k^{-\frac{\beta}{1-\beta}} p_{ik}^{\frac{1}{1-\beta}}}
    \Leftrightarrow \overline{w^*}_j = \Bigg[ \frac{1}{B}\sum_{i=1}^B  \frac{ p_{ij}^{\frac{1}{1-\beta}}}{\sum_{k=1}^K \overline{w^*}_k^{-\frac{\beta}{1-\beta}} p_{ik}^{\frac{1}{1-\beta}}} \Bigg]^{1 - \beta}. 
    \label{eqn:additional}
\end{align}
Based on Eq.~\ref{eqn:additional}, we propose the update rule for $\{u_j^{(n)}\}_{j=1}^K\subset \mathbb{R}_+$ using the fixed point iteration as follows:
\begin{align}
    \forall j\in \{1,...,K\},\quad
    u^{(n + 1)}_j =\bigg[ \frac{1}{B}\sum_{i=1}^B \frac{p_{ij}^{\frac{1}{1-\beta}}}{\sum_{k=1}^K (u^{(n)}_k)^{-\frac{\beta}{1-\beta}}
        p_{ik}^{\frac{1}{1-\beta}}} \bigg]^{1-\beta}
        \label{eqn:fixed-point-iteration},
\end{align}
where $u_j^{(n)}$ converges to $\overline{w^*}_j$ as $n\rightarrow\infty$ since the necessary and sufficient condition (Eq.~\ref{eqn:additional}) is satisfied at the convergence.
We can easily get ${w^*}_{ij}$ by Eq.~\ref{eqn:prop} when the optimal marginal probability $\overline{w^*}_j$ is given.
The proof of the proposition and convergence is in the Appendix.

By using the iterative updates of Eq.~\ref{eqn:fixed-point-iteration}, we get our desirable pseudo-labels $\bm{W^*}$.
This requires a few lines of code that are simple to implement.
We observe that a few steps of iteration are enough for training.
This is supported by the convergence analysis in Sec.~\ref{section:analysis}.
We use this fixed point iteration for pseudo-labeling and name the method--\textbf{M}utual \textbf{I}nformation \textbf{R}egularized \textbf{A}ssignment (\textbf{MIRA}) since it finds the pseudo-labels that are regularized by the mutual information.

\subsection{Representation learning with MIRA} \label{section:alg}
We explain how our pseudo-labeling method is applied to representation learning.
We integrate the computed pseudo-labels with swapped prediction loss~\cite{caron20} to train the model; the model is trained to predict the pseudo-labels from the augmented views using the other views.
Specifically, given two mini-batches of differently augmented views $\bm{X}^{(m)}=\{\bm{x}_i^{(m)}\}_{i=1}^B, m\in\{1,2\}$, MIRA independently assigns pseudo-labels $\bm{U}^{(m)}=\{\bm{u}_i^{(m)}\}_{i=1}^B$ for each mini-batch.
In parallel, model $f_\theta$ provides the temperature $\tau_s$ scaled softmax predictions\footnote{That is $\bm{q}_i^{(m)}\coloneqq\text{softmax}(f_\theta(\bm{x}_i^{(m)})/\tau_s)$.} $\bm{Q}^{(m)}=\{\bm{q}_i^{(m)}\}_{i=1}^B$ of each mini-batch.
The swapped prediction loss is given as follows:
\begin{align}
    L(\bm{X}^{(1)}, \bm{X}^{(2)}) &= \ell(\bm{U}^{(1)}, \bm{Q}^{(2)}) + \ell(\bm{U}^{(2)}, \bm{Q}^{(1)}) \notag \\
    &= - \frac{1}{B}\sum_{i=1}^B\sum_{j=1}^K u^{(1)}_{ij}\log{q^{(2)}_{ij}} - \frac{1}{B}\sum_{i=1}^B\sum_{j=1}^K u^{(2)}_{ij}\log{q^{(1)}_{ij}}.
    \label{eqn:loss}
\end{align}
This loss function (Eq.~\ref{eqn:loss}) is minimized with respect to the parameters $\theta$ of the model $f_\theta$ used to produce the predictions $\bm{Q}^{(1)}, \bm{Q}^{(2)}$.
For more detailed information about swapped prediction loss, please refer to \citet{caron20}.

The pseudo-code of MIRA for representation learning with Eq.~\ref{eqn:loss} is provided in the Appendix.
In the following experiments, we verify the effectiveness of MIRA for a representation learning purpose.
We note that MIRA can integrate recently suggested self-supervised learning components, such as exponential moving average (EMA) or multi-crop (MC) augmentation strategy following the baselines~\cite{chen21_2,caron20,caron21}.
For convenience, in the rest of this paper, we call the \textit{representation learning with MIRA} also as MIRA.
We discuss some further details as follows:

\paragraph{Preventing collapse}
The MI term in Eq.~\ref{eqn:objective} takes a minimum value when collapsing happens.
MIRA naturally avoids collapsed solution via penalizing assignment that exhibits low MI.
Specifically, unless starting from the collapsed state, MIRA finds MI-maximizing points around the model prediction; it will not choose collapsed pseudo-labels.
Hence, the iterative training to predict such labels will not collapse whenever the prediction of pseudo-labels is achievable.
Our empirical results verify that MIRA does not require extra training techniques or artificial constraints to address collapsing.

\paragraph{Comparison to SwAV and SeLa}
SeLa~\cite{asano20} and SwAV~\cite{caron20} formulate their pseudo-labeling process into optimization problems, i.e., optimal transport (OT) problem, and solve it iteratively with Sinkhorn-Knopp (SK) algorithm~\cite{cuturi13}.
To avoid collapse and apply the SK algorithm, they assume the equipartition of data into clusters.
Mathematically, the difference to MIRA is in \textit{how to deal with the marginal entropy}.
SeLa and SwAV constrain the marginal entropy to maximum value--equipartition while MIRA decides it by MI regularization\footnote{Adding the equipartition constraint into Eq.~\ref{eqn:objective}, our problem converts to the OT problem of SwAV~\cite{caron20}.}.
\citet{asano20} argue that their pseudo-labels with the OT problem maximize the MI between labels and data indices under the equipartition constraint.
However, it more resembles assuming MI maximization and then finding the cluster assignments that are optimal transport to the model prediction.
In contrast, MIRA directly maximizes the MI by regularization without artificial constraints.
While SwAV performs better than SeLa in most self-supervised benchmarks, we verify that MIRA improves over SwAV in various downstream tasks.

\section{Experiments}
In this section, we evaluate the representation quality learned via MIRA.
We first provide the implementation details of our representation learning with MIRA (Sec.~\ref{section:implementation-details}).
We present our main results on linear, {\it k}-NN, semi-supervised learning, and transfer learning benchmarks in comparison to other self-supervised baselines (Sec.~\ref{section:main-results}).
Finally, we conduct an analysis of MIRA (Sec.~\ref{section:analysis}).

\begin{table}[t]
\setlength{\tabcolsep}{5pt}
\small
\parbox[t][][t]{.48\linewidth}{
        \setlength{\tabcolsep}{5pt}
        \small
        \caption{\textbf{Linear evaluation with respect to training epochs.} All models use a ResNet-50 encoder and trained on training set of ImageNet. $\dagger$ are results from \cite{chen21}. Results style: \textbf{best}, \underline{second best}}
        \label{table:linear-epoch}
        \centering
        \begin{tabular}{@{}lcccc@{}}
        \toprule
        & \multicolumn{4}{c}{Epochs} \\ \cmidrule(lr){2-5}
        Method  & 100   & 200  & 400  & 800  \\ \midrule
        \multicolumn{4}{l}{\textit{without multi-crop augmentations}}\\
        SimCLR$\dagger$ \cite{chen20}  & 66.5  & 68.3 & 69.8 & 70.4 \\
        BYOL$\dagger$ \cite{grill20}    & 66.5  & 70.6 & \underline{73.2} & \textbf{74.3} \\
        SimSiam$\dagger$ \cite{he20} & 68.1  & 70.0 & 70.8 & 71.3 \\
        MoCo-v3 \cite{chen21_2} & 68.9  & -    & -    & 73.8 \\ \cdashlinelr{1-5}
        DeepCluster-v2 \cite{caron20} & - & - & 70.2 & -\\
        SwAV$\dagger$ \cite{caron20}    & 66.5  & 69.1 & 70.7 & 71.8 \\
        TWIST \cite{feng21}   & \textbf{70.4}  & \underline{70.9} & 71.8 & 72.6 \\
        MIRA    & \underline{69.4} & \textbf{72.3} & \textbf{73.3} & \underline{74.1} \\ \midrule
        \multicolumn{4}{l}{\textit{with multi-crop augmentations}}\\
        DeepCluster-v2 \cite{caron20} & - & - & - & 75.2\\
        SwAV \cite{caron20}    & 72.1  & \underline{73.9} & \underline{74.6} & \underline{75.3} \\
        TWIST \cite{feng21}   & \underline{72.9}  & 73.7 & 74.4 & 74.1 \\
        MIRA    & \textbf{73.6}  & \textbf{74.9} & \textbf{75.6} & -   \\ \bottomrule
        \end{tabular}
}
\hfill
\parbox[t][][t]{0.5\linewidth}{
        \setlength{\tabcolsep}{5pt}
            \small
            \caption{\textbf{Linear evaluation on ImageNet.} Comparison with other self-supervised methods on ImageNet. \textit{SL} denotes for {\it self-labeling} by \cite{feng21}. Results style: \textbf{best}}
            \label{table:linear-sota}
            \centering
            \begin{tabular}{@{} l c c c c @{}}
                \toprule
                Method      & Arch. & Epochs & Top-1 & Top-5  \\
                \midrule
                Supervised     & R50 & -    & -    & -    \\
                \midrule
                PCL \cite{li21} & R50 & 200  & 67.6 & - \\
                SimSiam \cite{chen21}             & R50 & 800  & 71.3 & - \\
                SimCLR-v2 \cite{chen20_2}      & R50 & 800  & 71.7 & -    \\
                InfoMin \cite{tian20}       & R50 & 800  & 73   & 91.1 \\
                BarlowTwins \cite{zbontar21}    & R50 & 1000 & 73.2 & 91.0    \\
                VicReg \cite{bardes22}  & R50 & 1000 & 73.2 & 91.1 \\
                SelfClassifier \cite{amrani21} & R50 & 800  & 74.1 & -    \\
                TWIST w/o \textit{SL} \cite{feng21}   & R50 & 800  & 74.1 & -    \\
                BYOL \cite{grill20}           & R50 & 1000 & 74.3 & 91.6 \\
                MoCo-v3 \cite{chen21_2}       & R50 & 1000 & 74.6 & -    \\
                DeepCluster-v2 \cite{caron20} & R50           & 800  & 75.2 & -    \\
                SwAV  \cite{caron20}         & R50 & 800  & 75.3 & -    \\
                DINO \cite{caron20}           & R50 & 800  & 75.3 & -    \\
                TWIST w/ \textit{SL} \cite{feng21}    & R50 & 450  & 75.5& - \\
                \midrule
                MIRA           & R50 & 400  & \textbf{75.6} & \textbf{92.5} \\
                \bottomrule
            \end{tabular}
}
\end{table}

\subsection{Implementation details}
\label{section:implementation-details}
We mostly follow the implementation details of representation learning from our baselines~\cite{caron20,caron21}.
More training details about evaluation procedures and analysis are described in the Appendix.
\paragraph{Architecture}
The training model (network) consists of an encoder, projection head, and classification head as in~\citet{caron20}.
We employ a widely used ResNet50~\cite{he16} as our base encoder and use the output of average-pooled $2048$d embedding for representation training and downstream evaluations.
The projection head is a $3$-layer fully connected MLP of sizes $[2048, 2048, d]$; each hidden layer is followed by batch normalization~\cite{ioeffe15} and ReLU.
The classification head is used to predict the pseudo-labels; it is composed of an L2-normalization layer and a weight-normalization layer of the size $d\times K$.
We use $d=256$ and $K=3000$.

\paragraph{Training details}
We train our model on the training set of the ILSVRC-2012 ImageNet-1k dataset~\cite{deng09} without using class labels.
We use the same data augmentation scheme (color jittering, Gaussian blur, and solarization) and multi-crop strategy (two $224\times224$ and six $96\times96$) used in \citet{caron21}.
We use a batch size of 4096 and employ the LARS optimizer~\cite{you17} with a weight decay of $10^{-6}$.
We use linearly scaled learning rate of $lr\times\text{batch size}/256$~\cite{goyal17} with a base learning rate of $0.3$.\footnote{Otherwise stated, we also use a linearly scaled learning rate for evaluation training.}
We adjust the learning rate with 10 epochs of a linear warmup followed by cosine scheduling.
We also use an exponential moving average (EMA) network by default.
When using EMA, we set the momentum update parameter to start from 0.99 and increase it to 1 by cosine scheduling.
We use temperature scales of $\tau_s=0.1, \tau_t = 0.225$ with a trade-off coefficient $\beta=2/3$.
We obtain soft pseudo-labels by 30 steps of the fixed point iteration.
We further discuss this choice in Sec.~\ref{section:analysis}.
Otherwise stated, we use the encoder model trained for 400 epochs with multi-crop augmentations for the downstream task evaluations in this section.

\subsection{Main results}
\label{section:main-results}

\paragraph{Linear evaluation}
Tables~\ref{table:linear-epoch} and \ref{table:linear-sota} report linear evaluation results.
We follow the linear evaluation settings in \cite{grill20, chen20}.
We train a linear classifier on the top of the frozen trained backbone with the labeled training set of ImageNet.
We train for 100 epochs using an SGD optimizer with a batch size of 1024.
We choose a learning rate\footnote{We use the learning rate of $0.075$ for multi-crop 400 epochs training.} with a local validation set in the ImageNet train dataset and adjust the learning rate by cosine annealing schedule.
We apply random-resized-crop and horizontal flip augmentations for training.
We evaluate the representation quality by the linear classifier's performance on the validation set of ImageNet.

Table~\ref{table:linear-epoch} shows linear evaluation performance in top-1 accuracy for different training epochs.
We train and report the performance of MIRA in two settings, with and without multi-crop augmentations.
With multi-crop augmentations, MIRA consistently outperforms baselines while achieving 75.6\% top-1 accuracy with only 400 epochs of training.
We also report that 200 epochs of training with MIRA can outperform the 800 epochs results of other baselines that do not use multi-crops.
Without multi-crop augmentations, MIRA performs slightly worse than BYOL \cite{grill20}.
However, MIRA performs the best among the clustering-based~\cite{caron18,caron20} and information-driven~\cite{zbontar21,feng21} methods.

In Table~\ref{table:linear-sota}, we compare MIRA to other self-supervised methods with the final performance.
MIRA achieves state-of-the-art performance on linear evaluation of ImageNet with only 400 epochs of training.
While TWIST~\cite{feng21} can achieve similar performance to MIRA within 450 epochs, they require an extra training stage with {\it self-labeling}; without it, they achieve 74.1\% accuracy with 800 epochs of training.
In contrast, MIRA does not require additional training.

\paragraph{Semi-supervised learning}
In Table~\ref{table:semi}, we evaluate the trained model on the semi-supervised learning benchmark of ImageNet.
Following the evaluation protocol in \cite{grill20,chen20}, we add a linear classifier on top of the trained backbone and fine-tune the model with ImageNet 1\% and 10\% subsets.
We report top-1 and top-5 accuracies on the validation set of ImageNet.
For the 1\% subset, MIRA outperforms the baselines; both the top-1 and top-5 accuracies achieve the best. For the 10\% subset, MIRA is comparable to SwAV \cite{caron20}.

\begin{table}[t]
\setlength{\tabcolsep}{5pt}
\small
\parbox[t][][t]{.48\linewidth}{
        \caption{\textbf{{\it k}-NN classification results on ImageNet with respect to subsets.} For 1\% and 10\% results, we evaluate the baselines by models of official codes. Baseline results are from \cite{caron21}.}
        \label{table:knn}
        \centering
        \begin{tabular}{lccc}
        \toprule
              & \multicolumn{3}{c}{ImageNet subset} \\
        Method      & 100\%           & 10\%     & 1\%      \\
        \midrule
        BYOL \cite{grill20}       & 64.8     & 57.4     & 45.2     \\
        SwAV \cite{caron20}        & 65.7     & 57.4    & 44.3     \\
        BarlowTwins \cite{zbontar21} & 66.0      & 59.0       & 47.7     \\
        DeepCluster-v2 \cite{caron20}        & 67.1    & 59.2     & 46.5     \\
        DINO \cite{caron21}       & 67.5  & 59.3     & 47.2     \\ \midrule
        MIRA        & \textbf{68.7}  & \textbf{60.7}     & \textbf{47.8}\\
        \bottomrule
        \end{tabular}
}
\hfill
\parbox[t][][t]{.5\linewidth}{
    \caption{\textbf{Semi-supervised learning results on ImageNet.} The baselines results are from \cite{zbontar21}. Results style: \textbf{best}, \underline{second best}}
    \centering
    \begin{tabular}{lcccc}
    \toprule
                  & \multicolumn{2}{c}{1\%} & \multicolumn{2}{c}{10\%} \\
                  & Top-1      & Top-5      & Top-1       & Top-5      \\
    \midrule
    Supervised    & 25.4       & 48.4       & 56.4        & 80.4       \\
    \midrule
    SimCLR  \cite{chen20}      & 48.3       & 75.5       & 65.6        & 87.8       \\
    BYOL \cite{grill20}          & 53.2       & 78.4       & 68.8        & 89         \\
    SwAV \cite{caron20}          & 53.9       & 78.5       & \textbf{70.2}        & \underline{89.9}       \\
    BarlowTwins \cite{zbontar21}   & \underline{55}         & \underline{79.2}       & 69.7        & 89.3       \\
    \midrule
    MIRA          & \textbf{55.6}       & \textbf{80.5}      & \underline{69.9}        & \textbf{90.0}       \\
    \bottomrule
    \end{tabular}
    \label{table:semi}
}
\end{table}

\paragraph{{\it k}-NN evaluation}
We evaluate the quality of learned representation via the nearest neighbor classifier.
We follow the evaluation procedure of \citet{caron21}.
We first store the representations of labeled training data;
then, we predict the test data by the majority vote of the \textit{k}-nearest stored representations.
We use 1/10/100\% subsets of ImageNet training dataset to produce labeled representations.
For ImageNet 1\% and 10\% subsets, we use the same subsets of semi-supervised learning evaluation.
We use the same hyperparameter settings in~\citet{caron21} with $k=20$ nearest neighbors, temperature scaling~\footnote{The temperature scaling $\tau$ is used to calculate contributions $\alpha_i\sim\exp(\text{distance}_i/\tau)$ and voting is weighted by the contributions of the nearest neighbors.} of $0.07$, and cosine distance metric.

Table~\ref{table:knn} shows the classification accuracies on the validation set of ImageNet.
The results show that our method achieves the best evaluation performance.
Specifically, our method outperforms the previous state-of-the-art DINO~\cite{caron21} on 100\% and 10\% subset evaluation by 1.2 $\sim$1.4\%.
We note that BarlowTwins~\cite{zbontar21}, a method also motivated by information maximization, shows a strong performance of 47.7\% in the 1\% subset evaluation.

\begin{table}[t]
\caption{\textbf{Linear evaluation results on the transfer learning datasets.} Following \citet{ericsson21}, we report top-1 accuracy on Food, CIFAR-10/100, SUN397, Cars, DTD; mean-per-class accuracy on Aircraft, Pets, Caltech-101, Flowers; 11-point mAP metric on VOC2007. Results style: \textbf{best}}
\label{table:transfer}
    \centering
    \resizebox{\linewidth}{!}{
\begin{tabular}{@{}ccccccccccccc@{}}
\toprule
           & Aircraft & Caltech101 & Cars  & CIFAR10 & CIFAR100 & DTD   & Flowers & Food  & Pets  & SUN397 & VOC2007 & avg. \\ \midrule
Supervised & 43.59    & 90.18      & 44.92 & 91.42   & 73.90     & 72.23 & 89.93   & 69.49 & \textbf{91.45} & 60.49  & 83.6 & 73.75    \\ \midrule
InfoMin \cite{tian20}    & 38.58    & 87.84      & 41.04 & 91.49   & 73.43    & 74.73 & 87.18   & 69.53 & 86.24 & 61.00  & 83.24 & 72.21   \\
MoCo-v2 \cite{chen20_3}    & 41.79    & 87.92      & 39.31 & 92.28   & 74.90     & 73.88 & 90.07   & 68.95 & 83.3  & 60.32  & 82.69 & 72.31   \\
SimCLR-v2 \cite{chen20_2} & 46.38    & 89.63      & 50.37 & 92.53   & 76.78    & 76.38 & 92.9    & 73.08 & 84.72 & 61.47  & 81.57 & 75.07   \\
BYOL \cite{grill20}      & 53.87    & 91.46      & 56.4  & 93.26   & 77.86    & 76.91 & 94.5    & 73.01 & 89.1  & 59.99  & 81.14 & 77.05   \\
DeepCluster-v2 \cite{caron20}       & 54.49    & 91.33      & 58.6  & 94.02   & \textbf{79.61}   & \textbf{78.62} & 94.72   & 77.94 & 89.36 & 65.48  & 83.94  & 78.92 \\
SwAV \cite{caron20}       & 54.04    & 90.84      & 54.06 & 93.99   & 79.58    & 77.02 & 94.62   & 76.62 & 87.6  & 65.58  & 83.68 & 77.97  \\ \midrule
MIRA       & \textbf{59.06}    & \textbf{92.21}      & \textbf{61.05} & \textbf{94.20}    & 79.51    & 77.66 & \textbf{96.07}   & \textbf{78.76} & \textbf{89.95} & \textbf{65.84}  & \textbf{84.10} & \textbf{79.86}   \\ \bottomrule
\end{tabular}}
\end{table}

\paragraph{Transfer learning}
We evaluate the representation learned by MIRA on the transfer learning benchmark with FGVC aircraft \cite{maji13}, Caltech-101 \cite{feifei04}, Standford Cars \cite{krause13}, CIFAR-10/100 \cite{krizhevsky09}, DTD \cite{cimpoi14}, Oxford 102 Flowers \cite{nilsback08}, Food-101 \cite{bossard14}, Oxford-IIIT Pets \cite{parkhi12}, SUN397 \cite{xiao10}, and Pascal VOC2007 \cite{everingham10} datasets.
We follow the linear evaluation procedure from \citet{ericsson21} that fits a multinomial logistic regression classifier on the extracted representations of 2048d from the trained encoder.
We perform a hyperparameter search on the L2-normalization coefficient of the logistic regression model.
The final performance is evaluated on the model that is retrained on all training and validation sets with the found coefficient.

Table~\ref{table:transfer} shows the performance of our algorithm compared to other baselines in 11 datasets.
MIRA outperforms supervised representation on 10 out of 11 datasets.
Compared to other self-supervised methods, representations learned from MIRA achieved the best performance in 9 out of 11 datasets, with an average improvement of 0.9\% over the second-best baseline method.
The results confirm that the representation trained with MIRA has a strong generalization ability for classification.

\begin{figure}[t]
	\vskip -0.1in
	\begin{center}
		\centerline{\includegraphics[width=\columnwidth]{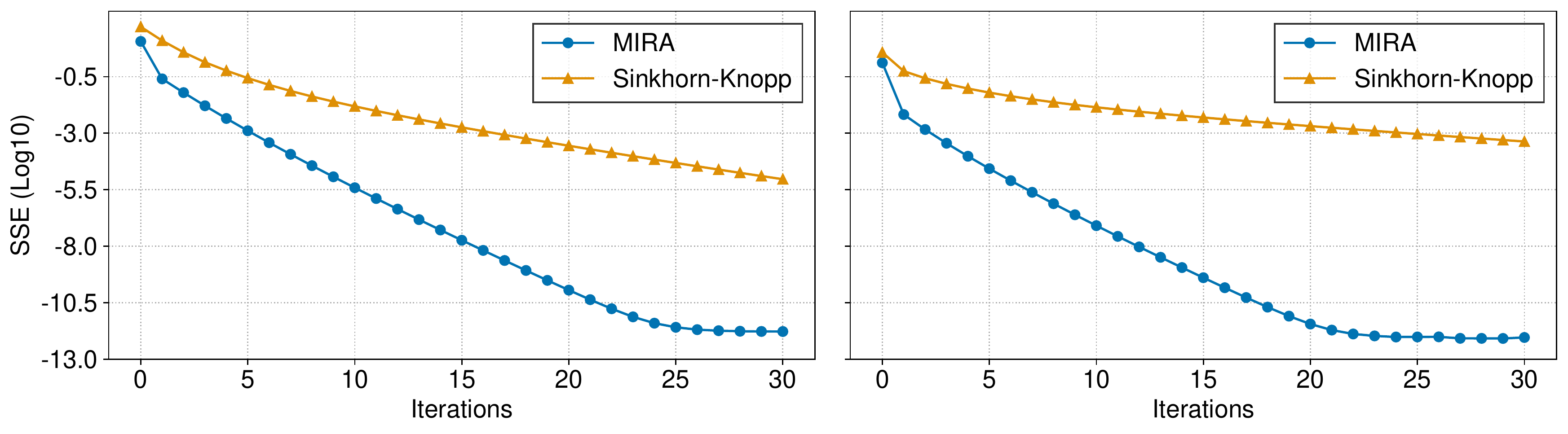}}
		\vskip -0.15in
	\end{center}
		\vskip -0.15in
	   \caption{\textbf{Convergence analysis of MIRA and Sinkhorn-Knopp.} We observe the converging behavior of MIRA \textbf{(blue)} and Sinkhorn-Knopp \textbf{(yellow)}. We experiment with trained models of MIRA \textbf{(left)} and SwAV \textbf{(right)}. Since both methods are proven to converge, we iterate each method 1000 steps and regard the results as ground truth. We report the sum-squared error (SSE) with respect to the converging point in the log scale.
    }
    \label{fig:convergence}
\end{figure}

\subsection{Analysis}
\label{section:analysis}

\paragraph{Convergence of pseudo-label assignment}
We study the convergence speed of the proposed fixed point iteration in MIRA.
We also experiment with the Sinkhorn-Knopp (SK) algorithm~\cite{cuturi13} used in SwAV~\cite{caron20} as a baseline.
Both methods have experimented with a batch size of 512.
We observe the converging behavior with the pre-trained models from MIRA and SwAV.
Results are averaged over 1000 randomly sampled batches.

Figure~\ref{fig:convergence} shows the result of the converging behavior of our method \textbf{(blue)} and SK algorithm \textbf{(yellow)} on trained models of MIRA \textbf{(left)} and SwAV \textbf{(right)}.
Our fixed-point iteration converges faster than the SK algorithm in both pre-trained models, and our default setting of 30 steps of updates is sufficient for convergence.
While we use 30 steps of updates to follow the theoretical motivation in our main experiments, we observe that choosing a small number of iterations is possible in practice.\footnote{The result with a small number of iterations is in the Appendix.}

\paragraph{Multi-crop and EMA}
Table~\ref{table:ablation} reports an ablation study on how EMA and multi-crop augmentations affect our representation quality.
We train a model for 200 epochs in the settings with and without EMA or Multi-crop augmentations.
EMA and Multi-crop augmentations greatly improve the linear evaluation performance as in~\citet{caron20, caron21}. 
We take a further comparison with baselines that are in the same setups.
While the only difference in the pseudo-labeling, our method outperforms SwAV~\cite{caron20} by 1.3\% in top-1 accuracy.
DINO~\cite{caron21} uses both multi-crop and EMA, our method outperforms DINO with fewer training epochs.
The results validate the effectiveness of MIRA.
\begin{table}[h]
    \small
    \setlength{\tabcolsep}{10pt}
    \caption{\textbf{Ablation study about EMA and multi-crop augmentation.} We report top-1 accuracy with linear evaluation on validation set of ImageNet. The results of SwAV is from \cite{he20}.}
    \centering
    \label{table:ablation}
    \begin{tabular}{@{}lccccc@{}}
    \toprule
    Method                & Multi-Crop & EMA & Epochs  & Top-1 \\ \midrule
    SwAV                  & \xmark          & \xmark   & 200 & 69.1      \\
    DINO                  & \cmark          & \cmark   & 300 & 74.5      \\ \midrule
    \multirow{3}{*}{MIRA} & \xmark          & \xmark   & 200 & 70.4      \\
                          & \xmark          & \cmark   & 200 & 72.3      \\
                          & \cmark          & \cmark   & 200 & 74.9      \\ \bottomrule
    \end{tabular}
\end{table}

\paragraph{Scalability}
We further validate MIRA's scalability on the small-and medium-scaled datasets.
ResNet-18 is used as a base encoder throughout the experiments.
While changing the base encoder, other architectural details remain the same as in ImageNet-1k.
We do not apply multi-crop augmentations while using the EMA.
We use image sizes of 32$\times$32 and 256$\times$256 for small and medium datasets, respectively.
Following the procedures in \citet{costa22}, we report the linear evaluation performance on the validation set.
More experimental details about the optimizer, batch size, augmentations, etc., are provided in the Appendix.

\begin{table}[h]
 \centering
 \setlength\tabcolsep{10pt}
 \small
 \caption{\textbf{Linear evaluation performance in small-and medium-scaled datasets.} We report top-1 and top-5 accuracies of linear evaluation on validation dataset.
The training results are based on 1000 and 400 epochs of training on CIFAR-10/100 and ImageNet-100, respectively.
Results style: \textbf{best}, \underline{second best}}
 \vspace{-7pt}
\label{table:small}
\begin{tabular}{@{}cccccccc@{}}
\toprule
\multirow{2}{*}{Method} & \multirow{2}{*}{Arch.} & \multicolumn{2}{c}{CIFAR-10} & \multicolumn{2}{c}{CIFAR-100} & \multicolumn{2}{c}{ImageNet-100} \\ \cmidrule(l){3-8} 
                         &                        & Top-1         & Top-5        & Top-1         & Top-5         & Top-1         & Top-5            \\ \midrule
BarlowTwins \cite{zbontar21}             & R18                   & 92.10          & 99.73        & \textbf{70.90}          & 91.91         & 80.16         & \underline{95.14}            \\
BYOL \cite{grill20}                    & R18                   & \underline{92.58}         & \underline{99.79}        & 70.46         & \underline{91.96}         & \underline{80.32}         & 94.94            \\
DeepCluster-v2 \cite{caron20}         & R18                   & 88.85         & 99.58        & 63.61         & 88.09         & 75.36         & 93.10             \\
DINO \cite{caron20}                    & R18                   & 89.52         & 99.71        & 66.76         & 90.34         & 74.90          & 92.78            \\
SwAV \cite{caron20}                     & R18                   & 89.17	     & 99.68	    & 64.88	        & 88.78          & 77.83         & 95.06            \\
MIRA              & R18                   & \textbf{93.02}             & \textbf{99.87}            & \underline{70.65}             & \textbf{92.23}             & \textbf{81.00}   & \textbf{95.56}   \\ \bottomrule
\end{tabular}
\end{table}

The results are in Table~\ref{table:small}.
In CIFAR-10 and ImageNet-100, our method outperforms other self-supervised baselines by 0.4\% and 0.7\% in top-1 accuracy, respectively.
For CIFAR-100, our method is comparable to the best performing baseline--BarlowTwins; MIRA performs better in top-5 accuracy.

\paragraph{Training with small batch}

Throughout the experiments in Sec.~\ref{section:main-results}, we use a batch size of 4096.
While such batch size is commonly used in self-supervised learning, a large amount of GPU memory is required, limiting the accessibility.
In Table~\ref{table:small-batch}, we test our method with a smaller batch size of 512 that can be used in an 8 GPU machine with 96GB memory.
In this setting, we use the SGD optimizer with a weight decay of $10^{-4}$.
We also test the robustness of pseudo-labeling with the Sinkhorn-Knopp algorithm in SwAV~\cite{caron20} and compare the results.

We report a top-1 linear evaluation performance of both methods after 100 epochs of training.
In the result, the performance gap between our method and SwAV is amplified from 2.9\% to 6\% in the reduced batch size of 512.
One possible explanation is that since SwAV is based on the equipartition constraint, the performance of SwAV harshly degrades when the batch size is not enough to match the number of clusters.
\begin{table}[h]
    \small
    \caption{\textbf{Linear evaluation performance with smaller batch size.} All results are based on ImageNet training. We also report the GPU memory usage and time spent for one epoch training. $\dagger$ is result by us.}
    \setlength{\tabcolsep}{5pt}
    \centering
\label{table:small-batch}
\begin{tabular}{@{}ccccccc@{}}
\toprule
Method & Batch size & Epochs & GPU         & GPU memory & Time per Epoch & Top-1 \\ \midrule
SwAV$\dagger$   & 512        & 100    & 8 $\times$ TITAN V & 71 GB        & 23 min     & 62.3      \\
MIRA w/o EMA   & 512        & 100    & 8 $\times$ TITAN V & 71 GB        & 23 min     & 66.3      \\
MIRA   & 512        & 100    & 8 $\times$ TITAN V & 73 GB        & 29 min     & 68.3      \\ \midrule
SwAV~\cite{chen21}   & 4096       & 100    & - & -        & -          & 66.5      \\
MIRA w/o EMA   & 4096       & 100    & 16 $\times$ A100   & 486 GB        & 9 min          & 68.7     \\
MIRA   & 4096       & 100    & 16 $\times$ A100   & 504 GB        & 9 min          & 69.4     \\
\bottomrule
\end{tabular}
\end{table}

\section{Discussion}
\label{section:discussion}
\paragraph{Conclusion}
This paper proposes the mutual information maximization perspective pseudo-labeling algorithm MIRA.
We formulate online pseudo-labeling into a convex optimization problem with mutual information regularization and solve it in a principled way.
We apply MIRA for representation learning and demonstrate the effectiveness in various self-supervised learning benchmarks.
We hope our simple yet theoretically guaranteed approach to mutual information maximization will guide many future applications.

\paragraph{Limitation and negative social impact} 
Our mutual information regularization with optimization strategy seems applicable to various tasks and domains, e.g., semi-supervised training~\cite{lee13}.
However, we validate the effectiveness only in self-supervised visual representation learning.
We note that the performance of MIRA for non-classification downstream tasks\footnote{We list the experimental results in the Appendix.}, e.g., detection, is not as dominating as in the classification tasks.
In these tasks, methods that consider local or pixel-wise information achieve superior performance; incorporating such formulation into clustering-based methods seems to be an important future direction.
Furthermore, despite our improved training efficiency, the self-supervised learning methods still require massive computation compared to supervised approaches.
Such computational requirements may accelerate the environmental problems of global warming.
\section*{Acknowledgments}
This work was supported by Institute of Information \& communications Technology Planning \& Evaluation (IITP) grant funded by the Korea government(MSIT) (No.2022-0-00926).
We thank Suyoung Lee, Minguk Jang, Dong Geun Shin, and anonymous reviewers for constructive feedbacks and discussions to improve our paper.
Part of the experiments for this work was done at the Fundamental Research Lab of LG AI Research.

\nocite{*}
\bibliography{main}
\bibliographystyle{abbrvnat}

\appendix
\newpage
\section{Appendix}

\subsection{PyTorch pseudo-code for MIRA}
\vspace{-0.15in}
\begin{algorithm}[H]
        \label{algorithm}
        \caption{PyTorch pseudo-code of MIRA}
        \definecolor{codeblue}{rgb}{0.25,0.5,0.5}
        \lstset{
          backgroundcolor=\color{white},
          basicstyle=\fontsize{7.2pt}{7.2pt}\ttfamily\selectfont,
          columns=fullflexible,
          breaklines=true,
          captionpos=b,
          commentstyle=\fontsize{7.2pt}{7.2pt}\color{codeblue},
          keywordstyle=\fontsize{7.2pt}{7.2pt},
        }
        \begin{lstlisting}[language=python]
# network(x): normalize(projector(encoder(x))) @ normalize(classifier.weights).T 
# tau_t: temperature scale for target
# tau_s: temperature scale for training
# beta: MI-regularization coefficient

for x in loader:
    # Multi-view augmentations
    x1, x2 = aug(x), aug(x)
    logit1, logit2 = network(x1), network(x2)
    
    # pseudo-labeling -- MIRA
    target1 = mira(logit1, tau_t, beta)
    target2 = mira(logit2, tau_t, beta)
    
    # Swapped prediction loss
    loss = CrossEntropyLoss(logit1/tau_s, target2) + CrossEntropyLoss(logit2/tau_s, target1)
    loss.backward() # back propagation
    
    # Optimization
    update(network) # sgd updates

# The fixed-point iteration
def mira(logit, tau, beta, iters=30):
    k = softmax(logit / tau / (1 - beta), dim=1)
    v = k.mean(dim=0).pow(1 - beta)
    for _ in range(iters):
        temp = k/(v.pow(- beta) * k).sum(dim=1)
        v = temp.mean(dim=0).pow(1 - beta)
    v = v.pow(- beta / (1 - beta))
    target = v * k / (v * k).sum(dim=1)
    return target
        \end{lstlisting}
\end{algorithm}

\subsection{Proof of proposition~\ref{prop:1}}
In this subsection, we derive the necessary and sufficient condition in proposition~\ref{prop:1}.
Denote $B, K$ be some natural numbers.
We first prove the strict convexity of the optimization function $f:\mathbb{R}_+^{BK\times1}\rightarrow\mathbb{R}$:
\begin{align}
        \label{eqn:appendix_objective}
        f(\bm{W}) = -\frac{1}{B}\sum_{i=1}^B\sum_{j=1}^K w_{ij}\log{p_{ij}} + \frac{1-\beta}{B}\sum_{i=1}^B\sum_{j=1}^K w_{ij}\log{w_{ij}} + \beta\sum_{j=1}^k \overline{w}_j \log{\overline{w}_j},
\end{align}
where $\overline{w}_j=\frac{1}{B}\sum_{i=1}^B w_{ij}$.

\begin{lemma}\label{lemma:entropy_convex}
For $\bm{x}\in\mathbb{R}^{N\times1}_+$, $s(\bm{x}) = \sum_i^N x_i\log x_i$ is a strictly convex function of $\bm{x}$.
\end{lemma}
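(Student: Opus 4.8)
The plan is to exploit the \emph{separable} structure of $s$: since $s(\bm{x}) = \sum_{i=1}^N g(x_i)$ with $g(t) = t\log t$ depending on a single coordinate, the problem reduces to the one-dimensional function $g$. First I would show that $g$ is strictly convex on $\mathbb{R}_+$ by a direct second-derivative computation: $g'(t) = \log t + 1$ and $g''(t) = 1/t > 0$ for every $t > 0$, so $g$ is strictly convex on the open half-line, and since $g$ is continuous at $t = 0$ under the usual convention $0\log 0 = 0$, strict convexity extends to the closed domain.

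Next I would lift this back to $\mathbb{R}^{N}_+$. One clean route is the Hessian: because $s$ is a sum of functions each depending on a single coordinate, $\nabla^2 s(\bm{x}) = \mathrm{diag}(1/x_1, \ldots, 1/x_N)$, which is positive definite on the positive orthant, and positive definiteness of the Hessian yields strict convexity. Alternatively, and more transparently for the boundary, I would argue directly from the definition: for distinct $\bm{x}, \bm{y}$ and $\lambda \in (0,1)$, choose a coordinate $k$ with $x_k \neq y_k$; strict convexity of $g$ gives a strict inequality in that coordinate while (weak) convexity gives $\le$ in the others, and summing produces $s(\lambda\bm{x} + (1-\lambda)\bm{y}) < \lambda s(\bm{x}) + (1-\lambda)s(\bm{y})$.

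The argument is essentially routine, so there is no serious obstacle; the one point requiring care is \emph{strictness} of the separable sum. A sum of functions each convex in only one variable need not be strictly convex jointly unless each summand is itself strictly convex in its own variable, which is precisely what $g'' > 0$ secures. A secondary subtlety is the boundary of $\mathbb{R}^{N}_+$: when the domain includes points with some $x_i = 0$, one must invoke the continuous extension $0\log 0 = 0$ and observe that strict convexity on the interior together with continuity up to the boundary suffices. With these checks in place the lemma follows, and it then serves as the building block for the strict convexity of the full objective $f$ in Eq.~\ref{eqn:appendix_objective}, since for $\beta \in [0,1)$ the coefficient $1-\beta$ is positive and the marginal term $\sum_j \overline{w}_j \log \overline{w}_j$ is convex (as $\overline{w}_j$ is linear in $\bm{W}$), so the strictly convex entropy term dominates.
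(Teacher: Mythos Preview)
Your argument is correct and matches the paper's own proof: both compute the Hessian of $s$ as the diagonal matrix with entries $1/x_i > 0$ and conclude strict convexity from its positive definiteness. Your additional remarks on the boundary case $x_i=0$ and the alternative coordinate-wise argument are extra care beyond what the paper provides, but the core route is the same.
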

\vspace{-0.2in}
\begin{proof}
Since the Hessian of $s$ is a diagonal matrix with positive elements $\nabla^2_{\bm{x}}s(\bm{x})_{i,i} = 1/x_i$, $s$ is a strictly convex function.
\end{proof}

\begin{corollary}\label{corollary:objective_convex}
For $\bm{W}\in\mathbb{R}_+^{BK\times1}$, $f(\bm{W})$ is a strictly convex function of $\bm{W}$.
\end{corollary}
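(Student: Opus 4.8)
The plan is to decompose $f$ into its three summands and show that strict convexity is inherited from a single strictly convex term while the other two terms need only be convex. First I would note that the first summand $-\frac{1}{B}\sum_{i=1}^B\sum_{j=1}^K w_{ij}\log p_{ij}$ is \emph{linear} in $\bm{W}$, since the model probabilities $p_{ij}$ are fixed constants; hence it is convex, though not strictly.

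For the second summand $\frac{1-\beta}{B}\sum_{i=1}^B\sum_{j=1}^K w_{ij}\log w_{ij}$, I would invoke Lemma~\ref{lemma:entropy_convex} with $N=BK$, treating the $BK$ entries $w_{ij}$ as the components of a single vector. The lemma gives strict convexity of $\sum_{i,j} w_{ij}\log w_{ij}$, and because $\beta\in[0,1)$ makes the prefactor $\frac{1-\beta}{B}$ strictly positive, this entire summand is strictly convex.

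For the third summand $\beta\sum_{j=1}^K \overline{w}_j\log\overline{w}_j$, I would write the marginal vector as $\overline{\bm{w}}=A\bm{W}$, where $A$ is the linear averaging map defined by $\overline{w}_j=\frac{1}{B}\sum_i w_{ij}$, so that this summand equals $\beta\, s(A\bm{W})$ with $s$ as in Lemma~\ref{lemma:entropy_convex}. The precomposition of the convex function $s$ with the linear map $A$ is again convex, and the factor $\beta\ge 0$ preserves convexity. I would deliberately \emph{not} claim strict convexity here, since $A$ is not injective (it averages over the batch index), but this is harmless.

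Finally I would combine the pieces: the sum of a strictly convex function (the second summand) with two convex functions (the first and third) is strictly convex, so $f$ is strictly convex on $\mathbb{R}_+^{BK\times1}$. The point that needs care—and the reason $\beta<1$ is essential—is the bookkeeping that strict convexity must come \emph{solely} from the middle term through the sign of $1-\beta$, since neither the linear first term nor the non-injectively composed third term is strictly convex on its own. A secondary technical subtlety is that the Hessian argument in Lemma~\ref{lemma:entropy_convex} is valid on the open orthant, so strict convexity at the boundary of $\mathbb{R}_+^{BK\times1}$, where some $w_{ij}=0$, is understood via continuity of $x\log x$.
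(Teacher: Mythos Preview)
Your proposal is correct and follows essentially the same decomposition as the paper's own proof: both split $f=f_1+f_2+f_3$ with $f_1$ affine, $f_2$ strictly convex via Lemma~\ref{lemma:entropy_convex} and the positivity of $1-\beta$, and $f_3$ convex as the composition of $s$ with an affine map, then conclude by summing. Your additional remarks on the non-injectivity of the averaging map and the boundary behavior of $x\log x$ are not in the paper's write-up but are reasonable clarifications rather than a different route.
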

\vspace{-0.2in}
\begin{proof}
Note that
\vspace{-0.1in}
\begin{enumerate}
    \item $f_1(\bm{W}) = -\frac{1}{B}\sum_{i=1}^B\sum_{j=1}^K w_{ij}\log{p_{ij}}$ is a affine transformation of $\bm{W}$,
    \item $f_2(\bm{W}) = \frac{1-\beta}{B}\sum_{i=1}^B\sum_{j=1}^K w_{ij}\log{w_{ij}}$ is a strictly convex function of $\bm{W}$ by lemma~\ref{lemma:entropy_convex},
    \item $f_3(\bm{W}) = \beta\sum_{j=1}^k \overline{w}_j \log{\overline{w}_j}$ is a convex function of $\bm{W}$ since $f_3$ is the composition of the strictly convex function $s$ and affine transformation $\overline{w}_j=\frac{1}{B}\sum_{i=1}^B w_{ij}$.
\end{enumerate}
\vspace{-0.1in}
The function $f(\bm{W})=f_1(\bm{W})+f_2(\bm{W})+f_3(\bm{W})$ that is the sum of the convex and strictly convex terms becomes strictly convex.
\end{proof}

We show that the optimization function $f$ is strictly convex.
The optimization problem (Eq.~\ref{eqn:objective}) is defined on the convex set, i.e., $\bm{W}\subset\Delta_K\coloneqq\{ x\in\mathbb{R}^K_+ \mid x^\intercal\bm{1}_K = 1 \}$; hence, the problem is a \textit{strictly convex optimization problem}\footnote{In the main paper, we misrepresent our optimization problem as a strongly convex optimization problem; while it is a strictly convex optimization problem.}.
By the strict convexity, our optimization problem has a unique optimal point that has Karush–Kuhn–Tucker (KKT) conditions as the necessary and sufficient condition for optimality.
By using the KKT condition, we derive our necessary and sufficient condition Eq.~\ref{eqn:prop}.

Considering the constraint of the optimization domain, i.e., $\bm{W}\in\Delta_K\Leftrightarrow\forall i\in 1:B, \sum_{j=1}^K w_{ij}=1$, the Lagrangian function of our optimization problem becomes:
\begin{align}
    \bm{W^*} &= \argmin_{\bm{W}\subset\Delta_K} -\frac{1}{B}\sum_{i=1}^B\sum_{j=1}^K w_{ij}\log{p_{ij}} + \frac{1-\beta}{B}\sum_{i=1}^B\sum_{j=1}^K w_{ij}\log{w_{ij}} + \beta\sum_{j=1}^k \overline{w}_j \log{\overline{w}_j} \\
    &= -\argmin_{\bm{W}\subset\Delta_K} \sum_{i=1}^B \sum_{j=1}^K \bigg[ w_{ij}\log{p_{ij}} + (1 - \beta) w_{ij}\log w_{ij} + \beta w_{ij}\log{\sum_{k=1}^B w_{kj}}\bigg] \\
    \Rightarrow L(\bm{W}, \bm{\lambda}) &= \sum_{i=1}^B \sum_{j=1}^K \bigg[ w_{ij}\log{p_{ij}} + (1 - \beta) w_{ij}\log w_{ij} + \beta w_{ij}\log{\sum_{k=1}^B w_{kj}}\bigg] + \sum_{i=1}^B \lambda_i(\sum_{j=1}^K w_{ij} - 1),
\end{align}
where $\bm{\lambda}$ is a Lagrange multipliers. By the KKT conditions with $\sum_{j=1}^K w_{ij}=1$,
\begin{align}
    (\nabla_{\bm{W}} L(\bm{W}^*, \bm{\lambda}^*))_{ij} &= -\log{p_{ij}} + (1 - \beta)(1 + \log w^*_{ij}) + \beta(1 + \log{\sum_{k=1}^B w^*_{kj}}) + \lambda^*_i = 0 \\
    \Leftrightarrow &\log w^*_{ij} = \frac{1}{1-\beta}\bigg(\log{p_{ij}} - \lambda_i - 1 - \beta \log{\sum_{k=1}^B w^*_{kj}} \bigg) \\
    \Leftrightarrow & w^*_{ij} = p_{ij}^{\frac{1}{1-\beta}}(\sum_{k=1}^B w^*_{kj})^{\frac{-\beta}{1 - \beta}}\exp(\frac{-\lambda^*_i - 1}{1 - \beta}) \\
    \Leftrightarrow & w^*_{ij} = \frac{p_{ij}^{\frac{1}{1-\beta}}(\sum_{k=1}^B w^*_{kj})^{\frac{-\beta}{1 - \beta}}}{\sum_{m=1}^K p_{im}^{\frac{1}{1-\beta}}(\sum_{k=1}^B w^*_{km})^{\frac{-\beta}{1 - \beta}}} = \frac{\overline{w^*}_j^{-\frac{\beta}{1-\beta}} p_{ij}^{\frac{1}{1-\beta}}}{\sum_{k=1}^K \overline{w^*}_k^{-\frac{\beta}{1-\beta}}
        p_{ik}^{\frac{1}{1-\beta}}}, \label{eqn:finally}
\end{align}
proves the necessary and sufficient condition in proposition~\ref{prop:1}.
The Eq.~\ref{eqn:finally} comes from $\sum_j w_{ij}^*=1$.
For $\overline{w^*}$, this condition becomes:
\begin{align}
    \overline{w^*}_j &= \frac{1}{B}\sum_{i=1}^B \omega^*_{ij} \\
    &= \frac{1}{B}\sum_{i=1}^B \frac{\overline{w^*}_j^{-\frac{\beta}{1-\beta}} p_{ij}^{\frac{1}{1-\beta}}}{\sum_{k=1}^K \overline{w^*}_k^{-\frac{\beta}{1-\beta}} p_{ik}^{\frac{1}{1-\beta}}} = \overline{w^*}_j^{-\frac{\beta}{1-\beta}} \frac{1}{B}\sum_{i=1}^B  \frac{ p_{ij}^{\frac{1}{1-\beta}}}{\sum_{k=1}^K \overline{w^*}_k^{-\frac{\beta}{1-\beta}} p_{ik}^{\frac{1}{1-\beta}}} \\
    \Leftrightarrow& \overline{w^*}_j = \Bigg[ \frac{1}{B}\sum_{i=1}^B  \frac{ p_{ij}^{\frac{1}{1-\beta}}}{\sum_{k=1}^K \overline{w^*}_k^{-\frac{\beta}{1-\beta}} p_{ik}^{\frac{1}{1-\beta}}} \Bigg]^{1 - \beta}. \label{eqn:optimality_condition}
\end{align}
\vspace{0.1in}

\subsection{Convergence of the fixed point iteration}
In this subsection, we prove the convergence of our fixed point iteration (Eq.~\ref{eqn:fixed-point-iteration}).
Denote $B, K$ be some natural numbers.
\begin{lemma} For $\bm{a}, \bm{x}\in\mathbb{R}^{B\times1}_+$ and $\beta\in [0,1)$, $h_{\bm{a}}(\bm{x})\coloneqq (\frac{1}{B}\sum_{i=1}^B a_i x_i^{-1})^{-\beta}$ is a concave function of $\bm{x}$.
\label{lemma:h_a_concave}
\end{lemma}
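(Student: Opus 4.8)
The plan is to prove concavity of $h_{\bm{a}}(\bm{x}) = \bigl(\frac{1}{B}\sum_{i=1}^B a_i x_i^{-1}\bigr)^{-\beta}$ by composing two standard facts: the inner map is a (negative) power-mean type function, and the outer power $t \mapsto t^{-\beta}$ is monotone and concave on $\mathbb{R}_+$ for $\beta \in [0,1)$. First I would set $g(\bm{x}) = \frac{1}{B}\sum_{i=1}^B a_i x_i^{-1}$, so that $h_{\bm{a}} = \phi \circ g$ with $\phi(t) = t^{-\beta}$. Since each term $a_i x_i^{-1}$ is convex in $x_i$ (the map $x \mapsto x^{-1}$ is convex on $\mathbb{R}_+$ for fixed positive coefficient $a_i$), the sum $g$ is convex as a nonnegative combination of convex functions. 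The outer function $\phi(t) = t^{-\beta}$ has $\phi'(t) = -\beta t^{-\beta-1} < 0$, so $\phi$ is nonincreasing, and $\phi''(t) = \beta(\beta+1) t^{-\beta-2} > 0$, so $\phi$ is convex; hence $\phi$ is both nonincreasing and convex on $\mathbb{R}_+$.

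The key composition rule I would invoke is: if $\phi$ is \emph{nonincreasing and convex} and $g$ is \emph{convex}, then $\phi \circ g$ is convex; for concavity I instead need the dual statement, namely that if $\phi$ is nonincreasing and concave and $g$ is convex then $\phi \circ g$ is concave. Since $\phi(t)=t^{-\beta}$ is convex rather than concave, the naive composition does not directly give concavity, so I expect the main obstacle to lie precisely here: I must argue concavity through the Hessian rather than a one-line composition rule. The plan is therefore to compute $\nabla^2 h_{\bm{a}}$ directly and show it is negative semidefinite. Writing $s = g(\bm{x})$, I would use the chain rule
\begin{align}
\nabla^2 h_{\bm{a}} = \phi''(s)\,\nabla g\,\nabla g^\intercal + \phi'(s)\,\nabla^2 g,
\end{align}
and then exploit the special diagonal structure of $\nabla^2 g$ together with the rank-one structure of $\nabla g\,\nabla g^\intercal$.

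Concretely, I would record $\partial_i g = -a_i x_i^{-2}$ and the diagonal Hessian $(\nabla^2 g)_{ii} = 2 a_i x_i^{-3}$. Substituting $\phi'(s) = -\beta s^{-\beta-1}$ and $\phi''(s) = \beta(\beta+1)s^{-\beta-2}$, I would factor out the positive scalar $\beta s^{-\beta-2}$ and reduce negative semidefiniteness of $\nabla^2 h_{\bm{a}}$ to showing that, for every $\bm{v}$, the quadratic form $(\beta+1)(\nabla g^\intercal \bm{v})^2 - s\, \bm{v}^\intercal \nabla^2 g\, \bm{v} \le 0$. Expanding both pieces, this becomes an inequality of the form $(\beta+1)\bigl(\sum_i a_i x_i^{-2} v_i\bigr)^2 \le \bigl(\sum_i a_i x_i^{-1}\bigr)\bigl(2\sum_i a_i x_i^{-3} v_i^2\bigr)$, which I would establish by Cauchy--Schwarz applied to the vectors with entries $\sqrt{a_i x_i^{-1}}$ and $\sqrt{a_i x_i^{-3}}\,v_i$. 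The Cauchy--Schwarz step yields exactly $\bigl(\sum_i a_i x_i^{-2} v_i\bigr)^2 \le \bigl(\sum_i a_i x_i^{-1}\bigr)\bigl(\sum_i a_i x_i^{-3} v_i^2\bigr)$, and since $\beta + 1 \le 2$ for $\beta \in [0,1)$ the factor of $2$ on the right absorbs the $(\beta+1)$ on the left. This closes the argument; the crux is recognizing that the correct bound on the rank-one term is supplied by Cauchy--Schwarz and that the constraint $\beta < 1$ is exactly what makes the constant work out.
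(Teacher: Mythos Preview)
Your proposal is correct and follows essentially the same route as the paper: compute the Hessian of $h_{\bm{a}}$ and bound the resulting quadratic form via Cauchy--Schwarz on the vectors $\sqrt{a_i x_i^{-1}}$ and $\sqrt{a_i x_i^{-3}}\,v_i$, using $\beta+1\le 2$ to close the constant. The only cosmetic difference is that you organize the Hessian through the chain-rule identity $\nabla^2(\phi\circ g)=\phi''(s)\,\nabla g\,\nabla g^\intercal+\phi'(s)\,\nabla^2 g$, whereas the paper writes out the mixed partials $(\nabla^2 h_{\bm{a}})_{l,m}$ directly; the resulting inequality and its Cauchy--Schwarz verification are identical.
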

\vspace{-0.1in}
\begin{proof}
Denote $r_{\bm{a}}(\bm{x})\coloneqq\frac{1}{B}\sum_{i=1}^B a_i x_i^{-1}$ for a simplicity of expression; hence $h_{\bm{a}}(\bm{x}) = r_{\bm{a}}(\bm{x})^{-\beta}$.
The second order partial derivatives of $h_{\bm{a}}(\bm{x})$ w.r.t. $\bm{x}$  becomes:
\begin{align}
    \Rightarrow & (\nabla_{\bm{x}} h_{\bm{a}}(\bm{x}))_l = \beta r_{\bm{a}}(\bm{x})^{-\beta-1}\frac{a_l}{Bx_l^2} \\
    \Rightarrow & (\nabla_{\bm{x}}^2 h_{\bm{a}}(\bm{x}))_{l,m,l\neq m} = \beta(\beta + 1)r_{\bm{a}}(\bm{x})^{-\beta-2} \frac{a_l}{Bx_l^2} \frac{a_m}{Bx_m^2} \\
    & (\nabla_{\bm{x}}^2 h_{\bm{a}}(\bm{x}))_{l,l} = \beta(\beta + 1)r_{\bm{a}}(\bm{x})^{-\beta-2} \frac{a_l}{Bx_l^2} \frac{a_l}{Bx_l^2} - \beta r_{\bm{a}}(\bm{x})^{-\beta-1}\frac{2a_l}{Bx_l^3}.
\end{align}
For $\omega\in\mathbb{R}^{B\times 1}$, 
\begin{align}
    \omega^\intercal \nabla_{\bm{x}}^2h_{\bm{a}}(\bm{x})\omega &=  \beta(\beta + 1)r_{\bm{a}}(\bm{x})^{-\beta-2}\sum_{l,m} \frac{a_l\omega_l}{Bx_l^2} \frac{a_m\omega_m}{Bx_m^2} - \beta r_{\bm{a}}(\bm{x})^{-\beta-1}\sum_l\frac{2a_l\omega_l^2}{Bx_l^3} \\
    &= \beta r_{\bm{a}}(\bm{x})^{-\beta-2} \Bigg((\beta + 1)(\sum_i \frac{a_i\omega_i}{Bx_i^2})^2 - 2(\sum_i\frac{a_i}{Bx_i})(\sum_i\frac{a_i\omega_i^2}{Bx_i^3})  \Bigg) \\
    &\leq \beta r_{\bm{a}}(\bm{x})^{-\beta-2} \Bigg((\beta + 1)(\sum_i \frac{a_i|\omega_i|}{Bx_i^2})^2 - 2(\sum_i\frac{a_i}{Bx_i})(\sum_i\frac{a_i\omega_i^2}{Bx_i^3}) \Bigg)\\
    &\leq - \beta(1-\beta) r_{\bm{a}}(\bm{x})^{-\beta-2} (\sum_i \frac{a_i|\omega_i|}{Bx_i^2})^2 \label{eqn:cachy}\\
    &< 0.
\end{align}
The inequality in Eq.~\ref{eqn:cachy} comes from Cauchy–Schwartz inequality as bellows:
\begin{align}
    (\sum_i\frac{a_i}{Bx_i})(\sum_i\frac{a_i\omega_i^2}{Bx_i^3})\geq (\sum_i \sqrt{\frac{a_i}{Bx_i} \frac{a_i\omega_i^2}{Bx_i^3}})^2 = (\sum_i \frac{a_i|\omega_i|}{Bx_i^2})^2.
\end{align}
Since $\forall \omega\in\mathbb{R}^{B\times 1}, \omega^\intercal \nabla_{\bm{x}}^2h_{\bm{a}}(\bm{x})\omega < 0$, the Hessian of $h_{\bm{a}}(\bm{x})$ is negative definite, $h_{\bm{a}}(\bm{x})$ is concave.
\end{proof} 

\begin{corollary} For $\bm{v}\in\mathbb{R}_+^{K\times1}, \bm{Q}\in\mathbb{R}_+^{B\times K}$, $g_j(\bm{v};\bm{Q}) \coloneqq h_{(\bm{Q}^\intercal)_j}(\bm{Q}\bm{v}) = (\frac{1}{B}\sum_{i=1}^B \frac{q_{ij}}{\sum_{k=1}^K v_k q_{ik}})^{-\beta}$ is a concave function of $\bm{v}$.
\label{cor:g_j_concave}
\end{corollary}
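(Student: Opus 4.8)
The plan is to recognize $g_j(\,\cdot\,;\bm{Q})$ as the composition of the concave function $h_{\bm{a}}$ from Lemma~\ref{lemma:h_a_concave} with a linear map, and then invoke the standard fact that concavity is preserved under precomposition by an affine map. First I would identify the right data: take the weight vector $\bm{a} = (\bm{Q}^\intercal)_j$, i.e.\ $a_i = q_{ij}$ for $i=1,\dots,B$, and consider the linear map $T\colon\mathbb{R}^{K\times1}_+\to\mathbb{R}^{B\times1}_+$ defined by $T(\bm{v})=\bm{Q}\bm{v}$, whose $i$-th coordinate is $(T\bm{v})_i=\sum_{k=1}^K q_{ik}v_k$. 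Substituting these into the definition of $h_{\bm{a}}$ gives
\[
h_{\bm{a}}(T\bm{v}) = \Big(\tfrac{1}{B}\sum_{i=1}^B \tfrac{a_i}{(T\bm{v})_i}\Big)^{-\beta} = \Big(\tfrac{1}{B}\sum_{i=1}^B \tfrac{q_{ij}}{\sum_{k=1}^K q_{ik}v_k}\Big)^{-\beta} = g_j(\bm{v};\bm{Q}),
\]
so that $g_j(\,\cdot\,;\bm{Q}) = h_{\bm{a}}\circ T$ holds identically in $\bm{v}$.

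Next I would check the domain hypotheses so that Lemma~\ref{lemma:h_a_concave} is in force along the image of $T$. Since $\bm{Q}$ has positive entries and $\bm{v}\in\mathbb{R}^{K\times1}_+$, each coordinate $(T\bm{v})_i = \sum_k q_{ik}v_k$ is positive, hence $T\bm{v}\in\mathbb{R}^{B\times1}_+$, which is exactly the domain on which Lemma~\ref{lemma:h_a_concave} guarantees concavity of $h_{\bm{a}}$; moreover $\bm{a}=(\bm{Q}^\intercal)_j\in\mathbb{R}^{B\times1}_+$, so the lemma's hypothesis on the weight vector is also met. I would then conclude by the affine-precomposition rule: for $\bm{v},\bm{v}'\in\mathbb{R}^{K\times1}_+$ and $t\in[0,1]$, linearity of $T$ gives $T(t\bm{v}+(1-t)\bm{v}')=tT\bm{v}+(1-t)T\bm{v}'$, and concavity of $h_{\bm{a}}$ then yields
\[
g_j(t\bm{v}+(1-t)\bm{v}') = h_{\bm{a}}\big(tT\bm{v}+(1-t)T\bm{v}'\big) \geq t\,h_{\bm{a}}(T\bm{v}) + (1-t)\,h_{\bm{a}}(T\bm{v}') = t\,g_j(\bm{v}) + (1-t)\,g_j(\bm{v}'),
\]
which is the desired concavity.

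I do not expect a genuine obstacle here, since all the analytic work---the Hessian computation and the Cauchy--Schwarz estimate---has already been absorbed into Lemma~\ref{lemma:h_a_concave}, leaving only a clean composition argument. The one point requiring care is the bookkeeping: correctly pairing the fixed slice $(\bm{Q}^\intercal)_j$ of $\bm{Q}$ with the role of the weight vector $\bm{a}$ while treating the combination $\bm{x}=\bm{Q}\bm{v}$ as the variable, and verifying that $\bm{Q}\bm{v}$ remains in the positive orthant so that the lemma applies throughout the relevant domain.
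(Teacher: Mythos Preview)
Your proposal is correct and follows essentially the same approach as the paper: write $g_j(\bm{v};\bm{Q}) = h_{(\bm{Q}^\intercal)_j}(\bm{Q}\bm{v})$ and invoke preservation of concavity under affine precomposition together with Lemma~\ref{lemma:h_a_concave}. The paper states this in a single sentence, whereas you additionally spell out the domain check and the explicit concavity inequality, but the argument is the same.
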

\vspace{-0.1in}
\begin{proof}
Since $g_j$ is a composition of the concave function $h_{(\bm{Q}^\intercal)_j}$ (by Lem.~\ref{lemma:h_a_concave}) and affine transformation with $\bm{Q}$; $g_j$ becomes concave.
\end{proof}

We introduce the proposition from \cite{piotrowski22} that proves geometrical convergence of positive concave mapping.
\begin{prop}[Piotrowski and Cavalcante~\cite{piotrowski22}, Proposition 3]\label{prop:geo_convergence}
Let $f: \mathbb{R}_{+}^{M} \rightarrow \text{int}(\mathbb{R}_{+}^{M})$ be a continuous and concave mapping w.r.t cone order with a fixed point $x^{*}\in \text{int}(\mathbb{R}_{+}^{M})$. Then, the fixed point iteration of $f$, i.e., $x_{n+1}=f(x_n)$, with $x\in \mathbb{R}_{+}^{M}$ converges geometrically to $x^{*}$ with a factor $c\in [0,1)$.
\end{prop}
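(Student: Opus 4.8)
The plan is to reduce the statement to the Banach contraction principle by endowing $\text{int}(\mathbb{R}_+^M)$ with the Thompson (part) metric $d_T(x,y) = \max_{1\le i\le M} |\log x_i - \log y_i|$. Passing to logarithmic coordinates $u=\log x$ (componentwise), the map $\phi:x\mapsto\log x$ is an isometry from $(\text{int}(\mathbb{R}_+^M), d_T)$ onto $(\mathbb{R}^M, \|\cdot\|_\infty)$, so the former is a \emph{complete} metric space, and the iteration $x_{n+1}=f(x_n)$ is conjugate to $u_{n+1}=F(u_n)$ with $F=\phi\circ f\circ\phi^{-1}$. Once I show that $f$ is a strict $d_T$-contraction, i.e.\ $d_T(f(x),f(y))\le c\,d_T(x,y)$ for a fixed $c\in[0,1)$, the contraction principle yields $\|u_n-u^*\|_\infty = d_T(x_n,x^*)\le c^n\,d_T(x_0,x^*)$, which is exactly geometric convergence of $x_n$ to $x^*$ (and incidentally re-proves uniqueness of the fixed point).

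The core of the argument is the contraction estimate, which I would derive from concavity together with the strict positivity encoded in $f(\mathbb{R}_+^M)\subset\text{int}(\mathbb{R}_+^M)$. Fix $x,y$ with $d=d_T(x,y)$, so that the sandwich $e^{-d}y\le x\le e^{d}y$ holds in the cone order. Using order-preservation of $f$ (monotonicity, inherent to the positive-concave setting) together with the sharpened concavity inequality $f(\lambda z)\le \lambda f(z)-(\lambda-1)f(0)$ valid for $\lambda\ge 1$ — obtained by applying concavity to the decomposition $z=\tfrac{1}{\lambda}(\lambda z)+(1-\tfrac1\lambda)\,0$ — I get $f(x)\le f(e^d y)\le e^d f(y)-(e^d-1)f(0)$, and symmetrically $f(y)\le e^d f(x)-(e^d-1)f(0)$. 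The subtracted term $(e^d-1)f(0)$ is where the interior hypothesis pays off: because $f(0)\in\text{int}(\mathbb{R}_+^M)$ is bounded away from the boundary, each coordinate ratio $f(x)_i/f(y)_i$ is pushed strictly below $e^d$, so that $d_T(f(x),f(y))\le c\,d$ with $c$ strictly smaller than $1$.

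The hard part will be upgrading this from mere nonexpansiveness ($c\le 1$) to a genuine contraction with a \emph{uniform} factor $c<1$. The estimate above converts the concavity defect $(e^d-1)f(0)$ into an improvement only after lower-bounding the ratios $f(0)_i/f(y)_i$ by some $\rho>0$, and such a bound is not available on all of $\text{int}(\mathbb{R}_+^M)$ since $f(y)$ need not be bounded above. I would resolve this by localizing: the global nonexpansiveness already guarantees $d_T(x_n,x^*)=d_T(f(x_{n-1}),f(x^*))\le d_T(x_{n-1},x^*)$, so the orbit stays in the Thompson ball $B_R=\{x: d_T(x,x^*)\le R\}$ with $R=d_T(x_0,x^*)$, and $f(B_R)\subset B_R$. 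In log-coordinates $B_R$ is the compact cube $\|u-u^*\|_\infty\le R$, so continuity of $f$ furnishes a uniform $\rho>0$ on $B_R$, hence a uniform factor $c=c(R)<1$ over all pairs in $B_R$ (the separation $d$ there being bounded by $2R$). Applying the local contraction along the trapped orbit then delivers the geometric rate on the entire trajectory.

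A secondary technical point to verify carefully is that concavity with respect to the cone order does entail the monotonicity used above within the positive-concave framework of the cited work; if order-preservation is not assumed outright, I would first derive it from concavity and the interior-valued property, and only then run the contraction estimate. Assembling the complete metric space, the forward-invariant ball, and the uniform local contraction into the Banach principle completes the proof of the claimed geometric convergence.
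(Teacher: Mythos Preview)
The paper does not prove this proposition; it is quoted from Piotrowski and Cavalcante~\cite{piotrowski22} and invoked as a black box to obtain convergence of the specific iteration in Eq.~\ref{eqn:fpn-g}. Your proposal therefore goes well beyond what the paper itself supplies, so there is no in-paper proof to compare against.

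On its own merits, your outline is the standard route to results of this type and is essentially correct: equip $\text{int}(\mathbb{R}_+^M)$ with the Thompson metric, use concavity together with $f(0)\in\text{int}(\mathbb{R}_+^M)$ to obtain monotonicity and subhomogeneity, hence global $d_T$-nonexpansiveness, and then sharpen to a strict contraction on each forward-invariant Thompson ball around $x^*$. Two small points would need to be made explicit in a full write-up. First, the initial point $x_0$ is allowed to lie on $\partial\mathbb{R}_+^M$, where $d_T$ is undefined; since $x_1=f(x_0)\in\text{int}(\mathbb{R}_+^M)$, you should set $R=d_T(x_1,x^*)$ rather than $d_T(x_0,x^*)$ and run the contraction argument from $n\ge 1$. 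Second, monotonicity does not follow from concavity in general, but here it does: each coordinate $f_i:\mathbb{R}_+^M\to(0,\infty)$ is concave and strictly positive on an unbounded cone, so its restriction to any coordinate ray is a positive concave function on $[0,\infty)$, which must be nondecreasing (a concave function on $[0,\infty)$ with a point of negative right-derivative tends to $-\infty$). With these two details filled in, your Banach-contraction argument on the compact log-cube $\|u-u^*\|_\infty\le R$ goes through and yields the stated geometric rate, in the same spirit as the cited reference.
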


By the proposition~\ref{prop:geo_convergence} and corollary~\ref{cor:g_j_concave}, we prove the convergence of the following fixed point iteration:
\begin{prop}
For $\beta\in [0,1)$, $\bm{Q}\in\mathbb{R}_+^{B\times K}$, and for any initialization $v^{(0)}\in\mathbb{R}_+^{K\times 1}$, the fixed point iteration:
\begin{align}
    v^{(n+1)}_j = g_j(\bm{v}^{(n)};\bm{Q}) = \Bigg[ \frac{1}{B}\sum_{i=1}^B \frac{q_{ij}}{\sum_{k=1}^K v^{(n)}_k q_{ik}} \Bigg]^{-\beta},
    \label{eqn:fpn-g}
\end{align}
converges to the fixed point.
\end{prop}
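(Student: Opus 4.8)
The plan is to directly invoke Proposition~\ref{prop:geo_convergence} (Piotrowski and Cavalcante), so the entire proof reduces to verifying that the map $\bm{g} = (g_1,\dots,g_K)$ defined componentwise by Eq.~\ref{eqn:fpn-g} satisfies the three hypotheses of that proposition: it must map $\mathbb{R}_+^{K}$ into the interior $\text{int}(\mathbb{R}_+^{K})$, it must be continuous and concave with respect to the cone order, and it must possess a fixed point in the interior. Each component $g_j$ is already shown to be concave by Corollary~\ref{cor:g_j_concave}, which handles the concavity requirement, and concavity of every component is exactly concavity of the vector map with respect to the standard (componentwise) cone order on $\mathbb{R}_+^K$. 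Continuity is immediate since $g_j$ is a composition of sums, reciprocals, and a power, all continuous on the positive orthant.

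First I would check the interior-mapping property, i.e.\ that $\bm{g}(\mathbb{R}_+^K) \subset \text{int}(\mathbb{R}_+^K)$. Fix any $\bm{v}\in\mathbb{R}_+^K$ and any coordinate $j$. Because $\bm{Q}\in\mathbb{R}_+^{B\times K}$ and $\bm{v}$ is positive, each denominator $\sum_{k=1}^K v_k q_{ik}$ is finite and positive, so the inner average $\frac{1}{B}\sum_{i=1}^B \frac{q_{ij}}{\sum_k v_k q_{ik}}$ is a strictly positive finite number; raising it to the power $-\beta$ with $\beta\in[0,1)$ yields a strictly positive finite value. Hence $v_j^{(n+1)}>0$ for every $j$, which is precisely the statement that $\bm{g}$ lands in $\text{int}(\mathbb{R}_+^K)$. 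This step is routine but is where the positivity assumption on $\bm{Q}$ is essential.

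The main obstacle is establishing the existence of a fixed point in the interior, which Proposition~\ref{prop:geo_convergence} assumes rather than proves. Here I would connect back to Proposition~\ref{prop:1} and Eq.~\ref{eqn:additional}: the optimization problem Eq.~\ref{eqn:objective} is strictly convex with a unique optimal point $\bm{W^*}$, and its marginal $\overline{\bm{w^*}}$ satisfies Eq.~\ref{eqn:additional}. Identifying $\bm{Q}$ with the matrix $(p_{ij}^{1/(1-\beta)})$ and substituting $\bm{v} = \overline{\bm{w^*}}^{-\beta/(1-\beta)}$ (equivalently noting the iteration Eq.~\ref{eqn:fpn-g} is the reparametrized form of Eq.~\ref{eqn:fixed-point-iteration} with $v_j = u_j^{-\beta/(1-\beta)}$ up to the outer exponent), the existence of the strictly positive optimal marginal $\overline{\bm{w^*}}\in\text{int}(\Delta_K)$ furnishes exactly the required interior fixed point of $\bm{g}$. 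The optimal marginal is strictly positive because the condition Eq.~\ref{eqn:prop} gives each $w^*_{ij}>0$ whenever $p_{ij}>0$, forcing $\overline{w^*}_j>0$.

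Having verified all three hypotheses, I would conclude by directly applying Proposition~\ref{prop:geo_convergence}: the fixed point iteration $\bm{v}^{(n+1)}=\bm{g}(\bm{v}^{(n)})$ converges geometrically, from any positive initialization $\bm{v}^{(0)}\in\mathbb{R}_+^K$, to the fixed point, which by the preceding identification corresponds to the optimal marginal $\overline{\bm{w^*}}$ of Eq.~\ref{eqn:objective}. This also retroactively justifies the claim after Eq.~\ref{eqn:fixed-point-iteration} that $u_j^{(n)}\to\overline{w^*}_j$, since the two iterations are related by a fixed continuous reparametrization. The only delicate points worth a sentence each are making the interior-mapping argument precise and making the correspondence between the $\bm{u}$-iteration and the $\bm{v}$-iteration explicit, as the geometric convergence factor $c$ then transfers automatically.
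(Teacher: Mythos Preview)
Your approach is essentially identical to the paper's: verify that $\bm{g}$ maps into the interior (positivity of outputs), invoke Corollary~\ref{cor:g_j_concave} for concavity, and then apply Proposition~\ref{prop:geo_convergence}. The paper's proof is in fact terser than yours---it simply notes positivity and concavity and immediately applies the cited result, without explicitly addressing continuity or the existence of an interior fixed point. Your additional paragraph linking the fixed-point existence back to the optimal marginal $\overline{\bm{w^*}}$ from Proposition~\ref{prop:1} fills a hypothesis that the paper leaves implicit; note, however, that this identification as written covers the specific $\bm{Q}=(p_{ij}^{1/(1-\beta)})$ arising in the paper rather than an arbitrary $\bm{Q}\in\mathbb{R}_+^{B\times K}$ as the proposition is stated, so if you want the full generality you should remark that the same convex-optimization argument goes through with any positive $\bm{Q}$ in place of that matrix.
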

\begin{proof}
We can see that output of the fixed point iteration $g_j(\bm{v}^{(n)};\bm{Q})$ is always positive.
By corollary~\ref{cor:g_j_concave}, $g(\bm{v}^{(n)};\bm{Q})$ is a concave mapping.
Therefore, by proposition~\ref{prop:geo_convergence}, the fixed point iteration converges to the fixed point.
\end{proof}

Finally, by bijective change of variables $v_j^{(n)} = \big(u_j^{(n)} \big)^{-\beta/(1-\beta)}$,
\begin{align}
    v^{(n+1)}_j &= g_j(\bm{v}^{(n)};\bm{Q}) = \Bigg[ \frac{1}{B}\sum_{i=1}^B \frac{q_{ij}}{\sum_{k=1}^K v^{(n)}_k q_{ik}} \Bigg]^{-\beta} \\
    \Leftrightarrow& \big( u^{(n+1)}_j \big)^{-\beta/(1-\beta)} = \Bigg[ \frac{1}{B}\sum_{i=1}^B \frac{q_{ij}}{\sum_{k=1}^K (u_k^{(n)} )^{-\beta/(1-\beta)} q_{ik}} \Bigg]^{-\beta} \\
    \Leftrightarrow& u^{(n+1)}_j = \Bigg[ \frac{1}{B}\sum_{i=1}^B \frac{q_{ij}}{\sum_{k=1}^K (u_k^{(n)} )^{-\beta/(1-\beta)} q_{ik}} \Bigg]^{1-\beta}, \label{eqn:original}
\end{align}
we derive our fixed point iteration in Eq.~\ref{eqn:fixed-point-iteration}.
Throughout updates, Eq.~\ref{eqn:original} continuously satisfies the relation $v_j^{(n)} = \big(u_j^{(n)} \big)^{-\beta/(1-\beta)}$ to the fixed point iteration by Eq.~\ref{eqn:fpn-g}; hence, the fixed point iteration by Eq.~\ref{eqn:original} also converges to the fixed point.
Furthermore, the fixed point by Eq.~\ref{eqn:original} satisfies the necessary and sufficient condition (Eq.\ref{eqn:optimality_condition}); is equal to $\overline{w^*}$.

\subsection{Implementation \& evaluation details}
For the transfer learning evaluation, we use the open-source implementation of \citet{ericsson21} at \url{https://github.com/linusericsson/ssl-transfer}.
For the k-NN evaluation, we refer to the official implementation of \citet{caron21} at \url{https://github.com/facebookresearch/dino}.
For the CIFAR-10/100 and ImageNet-100 training \& evaluations, we refer to the implementations of \citet{costa22} at \url{https://github.com/vturrisi/solo-learn}.

\subsubsection{ImageNet-1k}
\paragraph{Representation learning}
We describe more detailed implementation details here.
When we pre-train with EMA, we use the EMA backbone for downstream evaluations.
We exclude BN parameters and biases from weight decay in the LARS optimizer.
For 100/200 epochs training, we reduce the warm-up epochs to 5.
Similar to temperature annealing of \citet{caron21}, we apply cosine annealing on $\beta$ from $0.7$ to $2/3$.
For hyper-parameters tuning, we use ImageNet-100 tuned hyper-parameters ($\tau_t=0.225, \beta=2/3$) accross all ImageNet-1k, ImageNet-100, and CIFAR-10/100 datasets.
For other settings, including the projector design, number of clusters, and classification (prototype) head, we follow SwAV implementation~\cite{caron20, chen21}.

\paragraph{Semi-supervised learning}
We explain our semi-supervised learning evaluation training.
We train $20$ epochs with a batch size of $256$.
We employ an SGD optimizer with a cosine learning rate schedule.
For $1$\% training, we find that freezing the backbone encoder (with $0.$ learning rate) performs the best; we use learning rates of $0.4$ for the linear classifier.
For $10$\% training, we use learning rates of $0.02$ and $0.1$ for the backbone encoder and linear classifier.

\paragraph{More details about the linear evaluation}
As the local validation set (in the training dataset), we use the 1\% training split from SimCLR in the ImageNet train set as our local validation set.
We do not use regularization methods such as weight decay, gradient clipping, etc.

While we follow \citet{grill20} for linear evaluation in the main results (Sec.\ref{section:main-results}), it requires much computation.
Thus, for the additional linear evaluations in the analysis section (Sec.~\ref{section:analysis}) and Appendix, we fix to use a LARS optimizer with a learning rate of 0.1 since it performs well regardless of different models and matches the performance with the SGD optimizer.
For reference, we report MIRA's linear evaluation performance with the LARS optimizer.
\begin{table}[h]
\small
        \caption{\textbf{The linear evaluation results with a LARS optimizer}.}
        \centering
        \begin{tabular}{@{}lcccc@{}}
        \toprule
        & \multicolumn{4}{c}{Epochs} \\
        Method  & 100   & 200  & 400  & 800  \\ \midrule
        MIRA (without multi-crop)    & 69.4 & 72.1 & 72.9 & 73.8 \\ \midrule
        MIRA (with multi-crop)    & 73.5  & 74.8 & 75.5 & -   \\ \bottomrule
        \end{tabular}
\end{table}

\subsubsection{CIFAR-10/100 and ImageNet-100}
\paragraph{Representation learning}
For experiments on CIFAR-10/100 and ImageNet-100, we follow the settings in \citet{costa22}.
We use the same settings of ImageNet-1k experiments except for the base encoder, optimizer, augmentation scheme, and batch size.
We employ ResNet18 as a base encoder for experiments on CIFAR-10/100 and ImageNet-100.
For CIFAR-10/100 datasets, we change the first $7\times7$ convolution layer with stride $2$  into $3\times3$ convolution layer with stride $1$.
We use the SGD optimizer with a weight decay of $10^{-4}$.
We employ a linearly scaled learning rate with a base learning rate of $0.3$ as in ImageNet-1k and scheduled the learning rate with 10 epochs of a linear warmup followed by cosine scheduling.
For CIFAR-10/100 datasets, we remove the GaussianBlur and adjust the minimum scale of RandomResizedCrop to $0.2$.
We use batch sizes of 256 and 512 for CIFAR10/100 and ImageNet-100, respectively.
We train for $1000$ and $400$ epochs for CIFAR-10/100 and ImageNet-100, respectively.

\paragraph{Linear evaluation}
Following the \citet{costa22}, we report online and offline linear evaluation results for CIFAR-10/100 and ImageNet-100, respectively.
For online linear evaluation on CIFAR-10/100, we use the SGD optimizer with a learning rate of $0.1$.
We do not apply weight decay and use cosine scheduled the learning rate.
For the linear evaluation on ImageNet-100, we use the LARS optimizer with learning rate of $0.1$ and batch size of $1024$.

\subsection{Standard deviations}
We report the standard deviations for our main results.
The linear, semi-supervised, and transfer learning evaluations are conducted four times with four different random seeds.
\begin{table}[h]
\small
\centering
\caption{\textbf{The results on the linear and semi-supervised evaluations with standard deviations.}}
\begin{tabular}{@{}cccccc@{}}
\toprule
\multicolumn{2}{c}{Linear}    & \multicolumn{2}{c}{Semi 1\%}  & \multicolumn{2}{c}{Semi 10\%} \\
Top-1          & Top-5          & Top-1          & Top-5          & Top-1           & Top-5         \\ \midrule
75.61 (0.036) & 92.50 (0.02) & 55.62 (0.032) & 80.46 (0.041) & 70.11 (0.055)  & 89.9 (0.046) \\ \bottomrule
\end{tabular}
\end{table}

\begin{table}[h]
\caption{\textbf{The results on the transfer learning evaluation with standard deviations.}}
    \centering
    \resizebox{\linewidth}{!}{
\begin{tabular}{@{}ccccccccccc@{}}
\toprule
Aircraft & Caltech101 & Cars  & CIFAR10 & CIFAR100 & DTD & Flowers & Food  & Pets  & SUN397 & VOC2007 \\ \midrule
58.93 (0.16)  & 92.11 (0.035)     & 60.64 (0.28) & 94.20 (0.021)  & 79.61 (0.071)    & 77.62 (0.025) & 96.16 (0.037)   & 78.84 (0.087) & 89.95 (0.23) & 65.84 (0.0019)  & 84.10 (0.)   \\ \bottomrule
\end{tabular}}
\end{table}

\subsection{Results on 800 epochs training}
We report MIRA 800 epochs training results on the linear and \textit{k}-NN evaluations.
To account for longer epoch training, we use the initial EMA momentum value of 0.996 as in \citet{chen21_2}.
For the linear evaluation of 800 epochs training, we follow the evaluation protocols in the main results~\cite{grill20}.
\begin{table}[h]
\centering
\small
\caption{\textbf{MIRA 800 epochs training results on the linear and \textit{k}-NN evaluations.}}
\label{table:800epochs}
\begin{tabular}{@{}ccccc@{}}
\toprule
Epochs & Linear & k-NN (100\%) & k-NN (10\%) & k-NN (1\%) \\ \midrule
400 & 75.6 & 68.7 & 60.7 & 47.8 \\
800 & 75.7 & 68.8 & 61.1 & 48.2 \\ \bottomrule
\end{tabular}
\end{table}

\subsection{Experiments on the detection and segmentation task}
We test our method on detection\/segmentation of the COCO 2017 dataset with Masked R-CNN, R50-C4 on a 2x scheduled setting.
We use the configuration from the MoCo official implementation.
MIRA performs better than the supervised baseline and is comparable to MoCo; it is not as dominating as in the classification tasks.
\begin{table}[h]
\centering
\small
\caption{\textbf{Detection and segmentation results on the COCO 2017 dataset.}}
\label{table:det_seg}
\begin{tabular}{@{}ccccccc@{}}
\toprule
 & $\text{AP}^\text{bb}$ & $\text{AP}^\text{bb}_\text{50}$ & $\text{AP}^\text{bb}_\text{75}$ & $\text{AP}^\text{mk}$ & $\text{AP}^\text{mk}_{50}$ & $\text{AP}^\text{mk}_{75}$ \\ \midrule
Sup & 40 & 59.9 & 43.1 & 34.7 & 56.5 & 36.9 \\
MoCo & 40.7 & 60.5 & 44.1 & 35.4 & 57.3 & 37.6 \\
MIRA & 40.6 & 61 & 44.1 & 35.3 & 57.2 & 37.3 \\ \bottomrule
\end{tabular}
\end{table}

\subsection{Ablation studies}
\subsubsection{Ablation study on the number of clusters}
In table~\ref{ablation:cluster}, we study the effect of the number of clusters on the performance of linear and \textit{k}-NN evaluation (with 1\%/10\%/100\% labels). 
We train MIRA on ImageNet-1k for 100 epochs without multi-crop augmentations while varying the number of clusters.
When the number of clusters is sufficiently large ($\geq3000$), we observe no particular gain by varying the number of clusters. 
This is consistent with the observation in SwAV~\cite{caron20}.
\begin{table}[h]
\centering
\small
\caption{\textbf{The linear\/\textit{k}-NN evaluation results while varying the number of clusters}.}
\label{ablation:cluster}
\begin{tabular}{@{}lccccc@{}}
\toprule
\# of clusters & 300 & 1000 & 3000 & 10000 & 30000 \\ \midrule
Linear & 67.7 & 69 & 69.3 & 69.5 & 69.5 \\
\textit{k}-NN (100\%) & 58.9 & 60.5 & 61.6 & 61.7 & 61.7 \\
\textit{k}-NN (10\%) & 49.5 & 51.9 & 53.3 & 53.3 & 53.3 \\
\textit{k}-NN (1\%) & 36.6 & 39.7 & 41.0 & 41.0 & 41.0 \\ \bottomrule
\end{tabular}
\end{table}

\subsubsection{Ablation study on the number of the fixed point iteration}
We report the 100 and 400 epochs pre-training linear evaluation results with 1 and 3 fixed point iterations.
In the 100 epochs training, the models with a smaller number of fixed point iterations, 1it and 3it, perform slightly better (+0.2\%); in 400 epochs training, the model with more iterations, 30it, performs better (+0.2$\sim$0.3\%).
The convergence to the fixed point is not the primary factor for learning; however, choosing a sufficiently large number of fixed point iterations in longer epoch training seems reasonable.
\begin{table}[h]
\centering
\small
\caption{\textbf{The linear evaluation results of MIRA while varying the fixed point iteration steps}.}
\label{ablation:fpi}
\begin{tabular}{@{}cccc@{}}
\toprule
\# of iterations & 1it & 3it & 30it \\ \midrule
100 epochs & 69.5 & 69.5 & 69.3 \\
400 epochs & 72.7 & 72.6 & 72.9 \\ \bottomrule
\end{tabular}
\end{table}

\end{document}